\documentclass{article}

\usepackage[preprint]{neurips_2026}
\PassOptionsToPackage{numbers, compress}{natbib}
\usepackage[utf8]{inputenc} 
\usepackage[T1]{fontenc}    
\usepackage{hyperref}       
\usepackage{url}            
\usepackage{booktabs}       
\usepackage{amsfonts}       
\usepackage{nicefrac}       
\usepackage{microtype}      
\usepackage{xcolor}         
\usepackage{graphicx}
\usepackage{subfigure}

\definecolor{purple}{RGB}{186,85,211}
\definecolor{orange}{RGB}{255,140,0}
\definecolor{blue}{RGB}{30,144,255}

\usepackage{booktabs}  
\usepackage{multirow}  
\usepackage{array}     
\usepackage{booktabs}
\usepackage{colortbl}
\usepackage[most]{tcolorbox}
\usepackage{xcolor}
\usepackage{amsmath}
\usepackage{amssymb}
\usepackage{mathtools}
\usepackage{amsthm}
\usepackage{thm-restate}

\usepackage[capitalize,noabbrev]{cleveref}
\usepackage{graphicx}
\usepackage{algorithm}
\usepackage{algorithmic}
\usepackage{amssymb}
\usepackage{wrapfig}
\usepackage{hyperref}
\usepackage{cleveref}
\usepackage{dsfont}
\usepackage{breqn}
\usepackage{wrapfig}

\crefname{table}{table}{tables}
\Crefname{table}{Table}{Tables}
\crefname{figure}{figure}{figures}
\Crefname{figure}{Figure}{Figures}
\Crefname{algo}{Algorithm}{Algorithms}

\theoremstyle{plain}

\theoremstyle{definition}

\theoremstyle{remark}

\title{Learning over Positive and Negative Edges with \\Contrastive Message Passing}

\author{%
  Peter Pao-Huang \\
  Department of Computer Science\\
  Stanford University\\
  \texttt{peterph@cs.stanford.edu} \\
  \And
  Charilaos I. Kanatsoulis \\
  Department of Computer Science\\
  Stanford University \\
  \texttt{charilaos@cs.stanford.edu} \\
  \AND
  Michael Bereket \\
  Department of Computer Science\\
  Stanford University \\
  \texttt{mbereket@cs.stanford.edu} \\
  \And
  Jure Leskovec \\
  Department of Computer Science\\
  Stanford University \\
  \texttt{jure@cs.stanford.edu} \\
}

\begin{document}

\maketitle

\begin{abstract}
  Conventional approaches to learning on graphs involve message passing along existing (i.e., \textit{positive}) edges to update node features. However, these approaches often disregard the potentially valuable information contained in the absence (i.e., \textit{negative}) of edges. Here, we theoretically analyze the value of negative edges in graph representations and prove that in settings of low label rates, high homophily, and high edge density, access to negative edges provides significant information gain over using only positive edges. Motivated by this insight, we introduce Contrastive Message Passing (CMP), a general message passing architecture that enable graph neural network layers to reason over positive and negative edges. By imposing soft positive semidefinite constraints on the learnable weights, our approach differentially applies similarity-preserving transformations to positively connected nodes and dissimilarity-inducing transformations to negatively connected nodes. Over simulated and real datasets in varying data regimes, CMP consistently outperforms baselines in low-label settings when negative edges are informative. 
\end{abstract}

\section{Introduction}
Modern graph neural networks (GNNs) commonly learn from graph-structured data by iteratively passing messages between nodes (aka \textit{Message Passing Graph Neural Networks}) to learn representations used for downstream tasks \cite{bhagat2011node,rong2019dropedge,park2019estimating,zhang2018link,zhu2021neural,yun2021neo,errica2019fair,chen2019powerful}. While conventional approaches leverage existing edges in the original graph or user-defined constructions like radius or k-nearest neighbor graphs \cite{wu2020comprehensive}, they share a common limitation: these methods exclusively pass messages along positive relationships (assuming homophily), effectively discarding the potentially valuable information encoded in the absence of connections between nodes. For example, in a political network, the absence of collaborations or negative edges (signaling opposing ideological positions or party rivalries) between politicians can be just as informative as positive, existing collaborations. 

Several approaches have explored remediating this issue by learning over both positive and negative edges; they broadly fall into three categories: methods employing separate unconstrained weights for positive/negative edges \cite{liu2023pane,chen2024sigformer}, attention-based approaches that attend to all node pairs (with different sets of parameters for positive and negative edges) \cite{kreuzer2021rethinking}, and contrastive learning techniques \cite{you2020graph,zhu2021graph}. However, two key issues persist across these works. First, these works implicitly assume negative edges are always beneficial without characterization of when they may underperform. As a result, methods incorporating negative edges may exhibit poor performance in scenarios that appear random but follow predictable patterns based on graph properties. Second, existing approaches either entirely lack the potentially valuable geometric inductive bias---positively connected nodes should have similar embeddings and negatively connected nodes should have different embeddings---when reasoning over positive and negative relationships (e.g. unconstrained weight methods \cite{liu2023pane,chen2024sigformer}) or only apply it during training but not inference (e.g. methods using auxiliary contrastive losses \cite{you2020graph,zhu2021graph}).

As such, two core questions remain unanswered: \textbf{(Q1)} When and why are negative edges informative? \textbf{(Q2)} How can we integrate messages from positive and negative edges in a principled manner that let us effectively leverage the geometry of positive and negative edges?

\textbf{Contributions}. To answer the first question, we conduct an information-theoretic analysis on Stochastic Block Models \cite{holland1983stochastic} to quantify when negative edges provide meaningful signal. Our analysis reveals that negative edges deliver maximal information gain under the conditions of high homophily ratio, high edge density, and low label rates. Conversely, in graphs with low homophily, sparse connectivity, or abundant labeled data, negative edges contribute minimal additional information beyond what positive edges already provide. These findings establish both a theoretical justification for incorporating negative edges and clear guidance on the specific conditions where negative edge information will be beneficial—enabling practitioners to determine when negative edges should be employed and when they should be avoided.

To answer the second question, we introduce Contrastive Message Passing (CMP), a general message passing layer that reasons over both positive and negative edges by incorporating contrastive dynamics directly in the model architecture. By reweighting the negative eigenvalues of the weight matrices, CMP differentially transforms node features to bring positively connected nodes closer while pushing negatively connected nodes apart in a flexible and learnable manner. Our formulation offers two key advantages over loss-based contrastive methods: (1) CMP enables single-objective optimization focusing solely on the downstream task (e.g., node classification) rather than balancing multiple competing objectives (e.g., node classification + contrastive loss), and (2) CMP utilizes negative edge information during both training and inference, improving generalization. Furthermore, the modular design of CMP makes it easy to integrate it into existing GNN frameworks. We demonstrate this flexibility by implementing CMP variants for both GraphSage \cite{hamilton2017inductive} and Graph Attention Networks \cite{velivckovic2017graph}. 

Comprehensive evaluation across simulated environments and nine real-world datasets validates our theoretical findings and the CMP architecture. Primarily, CMP demonstrates significant improvements over standard GNNs (learning over only positive edges) on datasets with high homophily, edge density, and lower label rates—achieving up to $25.49\%$ average improvement at the smallest label rates and $14.78\%$ overall improvement against standard GNNs. When compared specifically against contrastive learning methods, CMP achieves a $22.95\%$ average improvement. On datasets without favorable characteristics, we observe that standard GNNs generally perform better, supporting the findings from our theoretical analysis. This pattern of results strongly validates that CMP effectively leverages negative edge information in precisely the scenarios where our theory predicts it would be most beneficial.

\section{Negative Edges are Informative}
\label{section3}
In this section, we demonstrate when negative edges (i.e., absent connections) contribute meaningful signal through information-theoretic analysis of Stochastic Block Models (SBMs) \cite{holland1983stochastic}. We quantify the conditions—specifically regarding homophily ratio, edge density, and label rate—under which negative edges yield information gain over positive edges alone. These findings will provide both theoretical justification for leveraging negative edges and practical guidance on when their use will likely improve model performance.

\textbf{Preliminaries}. Consider a graph $\mathcal{G} = (\mathcal{V}, \mathcal{E}^+, \mathcal{E}^-)$ with node set $\mathcal{V}$ (where $n = |\mathcal{V}|$), positive edge set $\mathcal{E}^+$, and negative edge set $\mathcal{E}^- = \{(i, j) \mid (i, j) \notin \mathcal{E}^+\}$.\footnote{In practice, $\mathcal{E}^-$ is constructed by uniformly sampling $|\mathcal{E}^+|$ edges from $\{(i, j) \mid (i, j) \notin \mathcal{E}^+\}$ to maintain linear complexity with the number of positive edges. All implementations in \Cref{experiments} follow this definition. } For each node $i \in \mathcal{V}$, we denote its positive neighborhood as $\mathcal{N}_{pos}(i) = \{j | (i,j) \in \mathcal{E}^+\}$ and its negative neighborhood as $\mathcal{N}_{neg}(i) = \{k | (i,k) \in \mathcal{E}^-\}$. $G$ is partitioned equally into $C$ communities where positive edges form with probability $p_{in}$ within communities and $p_{out}$ between communities. We characterize this graph through three key parameters: the homophily ratio $s = \frac{p_{in}}{p_{in} + (C-1)p_{out}}$ measuring the tendency for within-community connections, the label rate $r = \frac{m}{n}$ representing the fraction of nodes with known labels (where $m$ is the number of label nodes), and the edge density $d = \frac{p_{in} + (C-1)p_{out}}{C}$ capturing the overall graph connectivity. The learning task is node classification where we predict each node's community membership. To capture the diminishing value of additional labeled neighbors, we introduce a redundancy function $f: \mathbb{R}^+ \rightarrow [0,1]$ with the property that $f(x) \rightarrow 0$ as $x \rightarrow \infty$, where $x$ denotes the expected number of labeled neighbors.

\textbf{Analysis}. Consider an unlabeled node that exchanges messages with its neighbors. We characterize the information gain from neighboring positive or negative edges with the following theorem:
\begin{restatable}{theorem}{infotheorem}
\label{theorem1}
The expected information gain for an unlabeled node $i \in \mathcal{V}$ from its positive neighbors $\mathcal{N}_{pos}(i)$ is
\begin{align}
IG_{pos}(s,r,d) = nr d \Delta H^+(s,d) f_{pos}(s,r,d)
\end{align}
and the expected information gain for an unlabeled node $i \in \mathcal{V}$ from its negative neighbors $\mathcal{N}_{neg}(i)$ is
\begin{align}
IG_{neg}(s,r,d) = nr (1-d) \Delta H^-(s,d) f_{neg}(s,r,d)
\end{align}
where $\Delta H^+(s,d)$ and $\Delta H^-(s,d)$ are the information gains per positive and negative edge respectively given by \Cref{lemma1} and \Cref{lemma2}.
\end{restatable}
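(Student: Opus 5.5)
The plan is to write each expected information gain as a product of three factors: the expected number of labeled neighbors of the relevant type, the information carried by one such neighbor, and a discount for redundancy among those neighbors. Positive and negative neighbors are handled by the same argument, so we describe it once and note where the two cases differ.

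\textbf{Counting labeled neighbors.} Fix an unlabeled node $i$. There are $m = nr$ labeled nodes; for large $n$ we ignore the $O(1)$ correction from excluding $i$. In the SBM, a given labeled node $j$ lies in $i$'s community with probability $1/C$ and outside it with probability $(C-1)/C$. Averaging over this, the event $(i,j)\in\mathcal{E}^+$ has probability
\[
\tfrac{1}{C}p_{in}+\tfrac{C-1}{C}p_{out}=d .
\]
By linearity of expectation, the expected number of labeled positive neighbors is $nrd$. Since $\mathcal{E}^-$ is the complement of $\mathcal{E}^+$, each pair is a negative edge with probability $1-d$, and the expected number of labeled negative neighbors is $nr(1-d)$.

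\textbf{Information per edge.} By \Cref{lemma1}, observing the label of one neighbor known to be positively connected reduces the entropy of $i$'s community label by $\Delta H^+(s,d)$. By \Cref{lemma2}, the corresponding reduction for a negatively connected neighbor is $\Delta H^-(s,d)$. Both quantities come from Bayes' rule: the posterior that $j$ shares $i$'s community is $s$ for a positive edge and $(1-p_{in})/(C(1-d))$ for a negative edge. Both posteriors are functions of $(s,d)$ only, because $p_{in}=sCd$ and $p_{out}=(1-s)Cd/(C-1)$, so $\Delta H^\pm$ depend only on $(s,d)$.

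\textbf{Aggregating and the main obstacle.} Summing over neighbors naively would give $nrd\,\Delta H^+$ and $nr(1-d)\,\Delta H^-$. These overcount, because information from several neighbors is not additive: once the posterior on $i$'s label concentrates, further neighbors add little. This is exactly where the redundancy function enters. The expected number of labeled neighbors is $x=nrd$ in the positive case and $x=nr(1-d)$ in the negative case, and both are functions of $(s,r,d)$ with $n$ fixed. We therefore set the per-edge discount to $f_{pos}(s,r,d)=f(nrd)$ and $f_{neg}(s,r,d)=f(nr(1-d))$; the definition of $f$ allows an additional dependence on $s$ through how quickly the posterior concentrates. Multiplying expected count, per-edge gain, and discount gives the two stated formulas. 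The step needing the most care is justifying that the dependence between neighbors can be absorbed into a single multiplicative factor $f_{pos}$ or $f_{neg}$. Our approach is to treat this as the modeling definition: $f$ is defined as the ratio of the true joint mutual information to the sum of per-edge gains, and the stated properties of $f$ ($f\in[0,1]$, $f\to 0$ as $x\to\infty$) follow from the subadditivity of mutual information under conditional independence given $i$'s label, together with the bound $H\le\log C$.
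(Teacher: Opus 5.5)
Your proposal is correct and follows the paper's argument. Both rewrite $p_{in}, p_{out}$ in terms of $s,d$ so that $\Delta H^\pm$ depend only on $(s,d)$, and both form the product of the expected number of labeled positive/negative neighbors ($nrd$ and $nr(1-d)$), the per-edge gain from \Cref{lemma1} and \Cref{lemma2}, and the redundancy factor. The differences are cosmetic: the paper counts all $n^2 r(1-r)$ labeled--unlabeled pairs and divides by the $n(1-r)$ unlabeled nodes instead of using per-node linearity of expectation, and it simply posits the multiplicative factors $f_{pos}, f_{neg}$ rather than motivating them through subadditivity of mutual information as you do.
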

\Cref{theorem1} quantifies the absolute information gain from positive and negative edges across homophily, edge density, and label rate. While both $IG_{pos}$ and $IG_{neg}$ increase with $s$, $d$, and $r$, their \textit{rates of increase} differ significantly, creating specific regimes where negative edges are particularly valuable. To identify these regimes, we introduce the following corollary, which normalizes the contribution of negative edges relative to total information gain:
\begin{restatable}{corollary}{corr}
\label{corollary1}
For an unlabeled node $i \in \mathcal{V}$, the relative contribution of negative neighbors $\mathcal{N}_{neg}(i)$ compared to the total information gain (from all neighbors $\mathcal{N}_{pos}(i) \cup \mathcal{N}_{neg}(i)$) is
\begin{align}
R_{neg}(s,r,d) = \frac{(1-d) \Delta H^-(s,d) f_{neg}(s,r,d)}{d \Delta H^+(s,d) f_{pos}(s,r,d) + (1-d) \Delta H^-(s,d)f_{neg}(s,r,d)}.
\end{align}
\end{restatable}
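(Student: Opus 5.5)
The corollary follows almost directly from \Cref{theorem1}. The plan is to define the total information gain available to the unlabeled node $i$ as the sum of the contributions from its two disjoint neighbor sets,
\begin{align*}
IG_{tot}(s,r,d) = IG_{pos}(s,r,d) + IG_{neg}(s,r,d),
\end{align*}
and then set $R_{neg} = IG_{neg}/IG_{tot}$. Summing is justified because $\mathcal{N}_{pos}(i)$ and $\mathcal{N}_{neg}(i)$ partition the remaining nodes, since $\mathcal{E}^-$ is the complement of $\mathcal{E}^+$. It is also consistent with the modeling already built into \Cref{theorem1}: redundancy among labeled neighbors is captured inside each factor $f_{pos}$ and $f_{neg}$, so the two expected gains are treated as additive contributions.

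The steps, in order, are as follows.
\begin{enumerate}
\item Substitute the two expressions from \Cref{theorem1}, $IG_{pos} = nrd\,\Delta H^+ f_{pos}$ and $IG_{neg} = nr(1-d)\,\Delta H^- f_{neg}$, into the ratio.
\item Observe that the common prefactor $nr$ appears in both the numerator and every term of the denominator, and cancel it. This requires $nr>0$, i.e.\ at least one labeled node, which holds in any nondegenerate setting.
\item The remaining expression is exactly the stated $R_{neg}(s,r,d)$.
\end{enumerate}

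The only point needing care is that the ratio is well defined, i.e.\ the denominator is nonzero. Each term in it is nonnegative: $d\in[0,1]$, $f$ takes values in $[0,1]$, and $\Delta H^\pm \ge 0$ because they are information gains (reductions in entropy) per edge by \Cref{lemma1} and \Cref{lemma2}. So I would state the corollary under the mild condition that at least one term is positive. Outside the degenerate cases $s = 1/C$ (no community signal) or $d\in\{0,1\}$ with a vanishing corresponding term, this condition holds automatically, and I would remark that in those degenerate cases $R_{neg}$ can be defined by continuity or left undefined. No step here is a real obstacle; the substantive work lies in \Cref{theorem1} and the per-edge lemmas.
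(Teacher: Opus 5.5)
Your proposal is correct and matches the paper's proof: define the total gain as $IG_{pos}+IG_{neg}$ from \Cref{theorem1}, take the ratio $IG_{neg}/(IG_{pos}+IG_{neg})$, and cancel the common factor $nr$. Your remarks on when the denominator is positive go slightly beyond what the paper states but do not change the argument. These remarks are the nonnegativity of $\Delta H^{\pm}$, which holds because the uniform prior already has the maximal entropy $\log C$, and the degenerate case $s=1/C$.
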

\begin{figure}
  \centering
\includegraphics[width=0.9\textwidth]{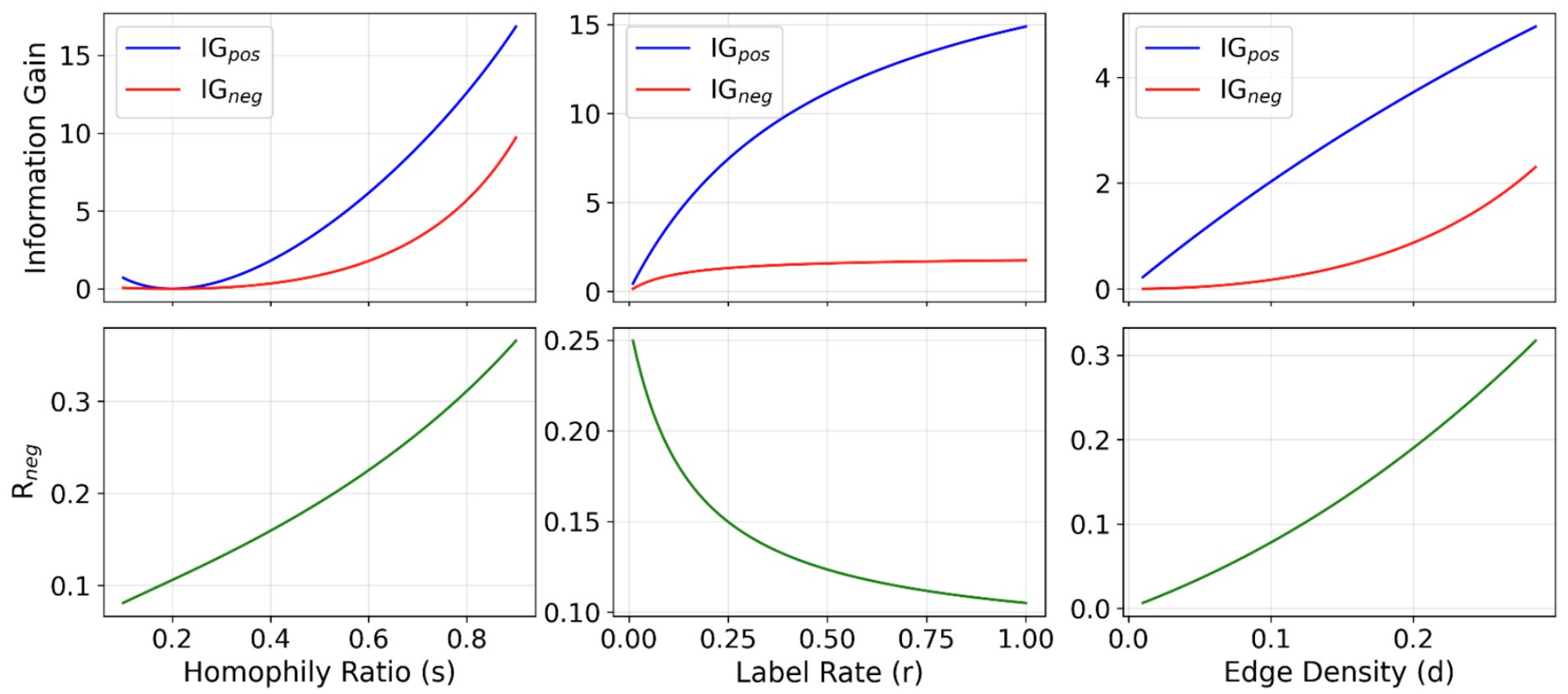}
  \caption{Values of (top row) $IG_{pos}$ and $IG_{neg}$ from \Cref{theorem1} and (bottom row) $R_{neg}$ from \Cref{corollary1} over different input levels of homophily, label rates, and edge densities.} 
  \label{theory_fig}
\end{figure}
We can interpret the implications of \Cref{theorem1} and \Cref{corollary1} by visualizing how information gain varies across the different graph parameters $s$, $r$, and $d$. We model information redundancy using the functional forms $f_{pos}(r,s,d) = \frac{1}{1+ \alpha (dsr)}$ and $f_{neg}(r,s,d) = \frac{1}{1+ \alpha ((1-d)(1-s)r)}$ where $\alpha = 20$ controls the rate of information saturation. These functions capture how information value diminishes as the expected number of labeled neighbors increases—$dsr$ for $\mathcal{E}^+$ and $(1-d)(1-s)r$ for $\mathcal{E}^-$. \Cref{theory_fig} presents the resulting information gains $IG_{pos}$, $IG_{neg}$, and the relative contribution $R_{neg}$ of negative edges across the parameter space.

\textbf{When Negative Edges are Informative}. Our analysis and \Cref{theory_fig} reveal three patterns governing when negative edges provide information gain: (\textbf{1. Homophily}) As homophily ratio $s$ increases, both $IG_{pos}$ and $IG_{neg}$ grow, but crucially, the relative contribution $R_{neg}$ also increases. This indicates that negative edges become disproportionately more valuable in highly homophilous graphs where community structure is clearer and the absence of edges carries stronger signal about node dissimilarity. (\textbf{2. Label Rate}) While both information gains increase with label rate $r$, the relative benefit $R_{neg}$ decreases. This inverse relationship suggests that negative edges are most valuable in sparsely labeled settings. As more labels become available, positive edges alone increasingly suffice for accurate classification. (\textbf{3. Edge Density}) Information gains and $R_{neg}$ all increase with edge density $d$. This pattern emerges because dense graphs reduce the likelihood of false negatives in random negative edge sampling; in sparse graphs, many unobserved edges may actually represent missing data rather than true negative relationships.

\textbf{Key Insight}. Negative edges provide maximum utility under conditions of \textit{high homophily} (strong community structure), \textit{high edge density} (reliable negative sampling), and \textit{low label rates} (limited supervised signal). In contrast, in graphs with weak community structure, sparse connectivity, or abundant labels, randomly sampled negative edges offer marginal additional information gain over positive edges.

Having established when negative edges can be informative, a key question is \textbf{how should graph neural networks differentially process positive versus negative edges?} While existing approaches employ separate unconstrained weight matrices for each edge type \cite{chen2024sigformer}, such methods lack the geometric inductive bias necessary for effective contrastive representation learning. We propose a principled solution based on the following core contrastive principle:
\begin{tcolorbox}[halign=center, left=2mm, top=1mm, right=2mm, bottom=1mm]
\label{principle_one}
\textbf{Principle 1}: Transform node embeddings during message passing such that positively connected nodes converge in representation space while negatively connected nodes diverge
\end{tcolorbox}
Our approach fundamentally differs from existing methods by embedding this contrastive principle \textit{directly into the message passing architecture}. In \Cref{experiments}, we empirically demonstrate the advantages of architecturally enforcing this contrastive logic rather than enforcing it via an auxiliary loss.

\section{Contrastive Message Passing}
\begin{figure}
  \centering
\includegraphics[width=0.9\textwidth]{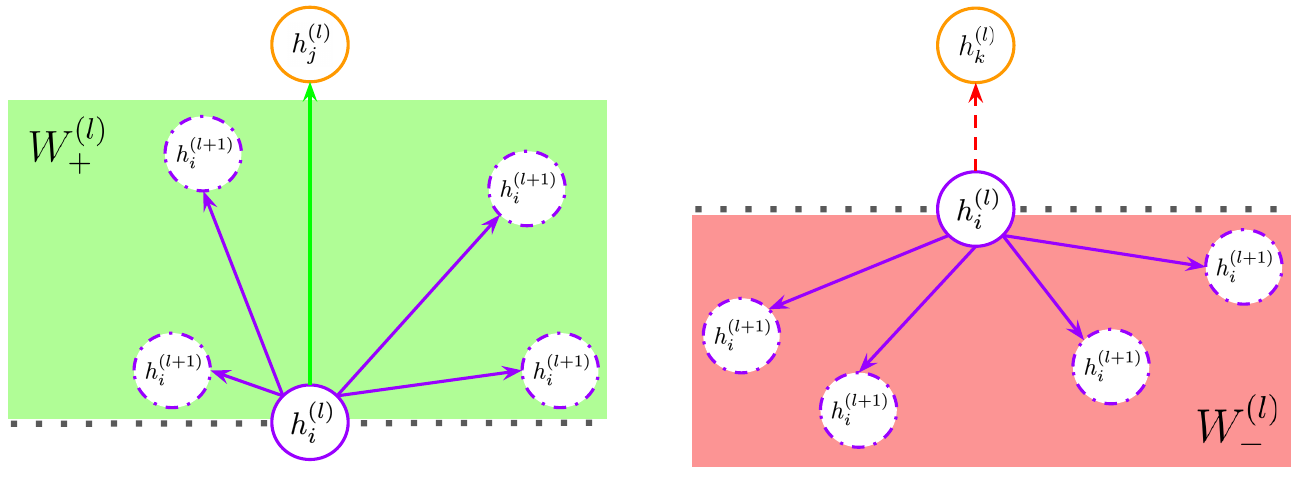}
  \caption{Geometric intuition of positive semidefinite weight (PSD) constraints. For positive edges (left), the PSD constraint on $W_{+}^{(l)}$ ensures that messages from neighbor $h_{j}^{(l)}$ are transformed to lie within the green region, which guarantees that the updated embedding $h_{i}^{(l+1)}$ moves closer to $h_{j}^{(l)}$ in representation space. For negative edges (right), the negated PSD matrix $-W_{-}^{(l)}$ constrains transformed messages from $h_{k}^{(l)}$ to the red region, ensuring $h_{i}^{(l+1)}$ moves away from $h_{k}^{(l)}$.}
  \label{psd_fig}
\end{figure}

Contrastive Message Passing (CMP) incorporates both positive and negative edges into the message passing scheme through separate transformations that encode the geometric inductive bias specified in Principle 1. We formalize this principle through the following message passing scheme:
\begin{align}
\label{cmp_equation}
    h_i^{(l+1)} = \text{agg}_{j \in \mathcal{N}_{pos}(i) \cup \{i\}} \left( W_{+}^{(l)}h_j^{(l)}\right) - \text{agg}_{k \in \mathcal{N}_{neg}(i)} \left( W_{-}^{(l)}h_k^{(l)}\right) \quad \text{s.t.}\quad  W_{+}^{(l)}, W_{-}^{(l)} \succeq 0
\end{align}
where $h_i^{(l)}$ is the node embedding at layer $l$, $\text{agg}$ denotes an aggregation function (e.g., mean or sum), $W_{+}^{(l)}$ and $W_{-}^{(l)}$ represent the transformation matrices for positive and negative edges respectively, and the notation $\succeq 0$ indicates positive semi-definiteness. The positive semi-definite (PSD) constraint on both weight matrices is critical to enforce the contrastive behavior of Principle 1: for positive edges, the transformation $W_{+}^{(l)}h_j^{(l)}$ maps neighbor features into the same halfspace as the original feature $h_j^{(l)}$. After aggregation, the updated embedding $h_i^{(l+1)}$ moves closer to its positive neighbors. For negative edges, while $W_{-}^{(l)}$ applies the same halfspace constraint, the negative sign in \Cref{cmp_equation} effectively flips this halfspace, ensuring that $h_i^{(l+1)}$ moves away from its negative neighbors. Figure \ref{psd_fig} illustrates this geometric interpretation. Note that while this is the most basic formulation for intuition, we provide specific implementations of CMP in popular architectures in \Cref{cmp_implementations}.

\subsection{\textit{Soft} Positive Semi-Definite Constraints}
\label{soft_psd_section}
\begin{figure}
  \centering
\includegraphics[width=0.9\textwidth]{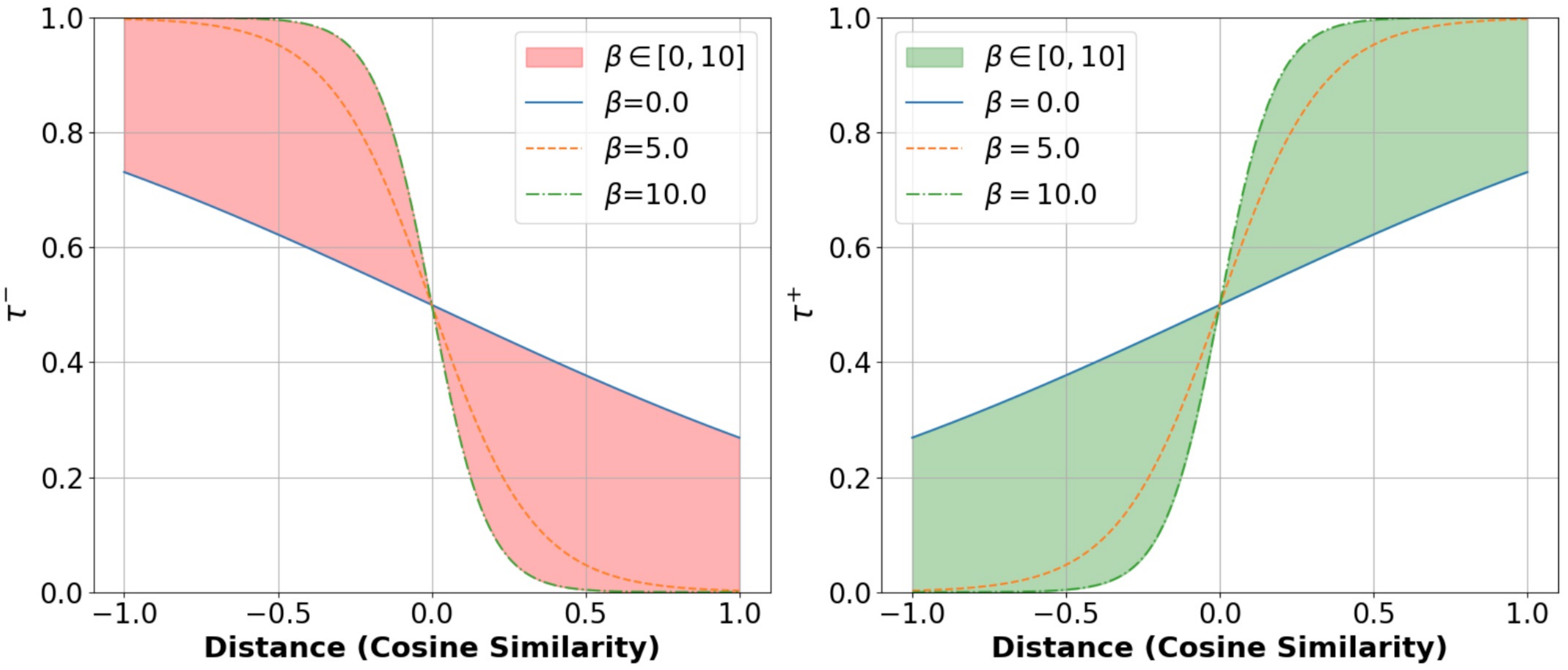}
  \caption{The $\tau$ value downweights the negative eigenvalues of the weight matrix based on the distance between nodes in the negative (left) and positive (right) case.} 
  \label{tau_beta_fig}
\end{figure}
While strict PSD constraints provide strong geometric inductive bias, they can be overly restrictive in practice. For instance, suppose two nodes that are positively connected are very similar. In this case, the nodes do not require being forced to be even closer together and can have some more flexibility with the learned transformation. Similarly, for negatively connected nodes, if the node embeddings are already very different, we do not need to continue pulling them apart by enforcing the PSD constraint. This relates to the common issue of collapsed representations in contrastive learning literature \cite{jing2021understanding}. To provide more learnable flexibility in these kinds of scenarios, we introduce soft PSD constraints that modulate the constraint strength based on node similarity. For any real symmetric matrix $W$ (i.e. the learnable weights), we apply:
\begin{align}
&\text{Soft-PSD}(W) = Q\hat{\Lambda}Q^{-1}
\quad \text{where} \quad \hat{\Lambda}_{ii} = 
\begin{cases}
\Lambda_{ii} & \text{if } \Lambda_{ii} \geq 0 \\
\tau \Lambda_{ii} & \text{if } \Lambda_{ii} < 0
\end{cases}.
\end{align}
The downweighting factor $\tau$ for negative eigenvalues is computed as:
\begin{align}
\tau = \begin{cases}
\sigma\left(+c(h_i,h_j)(1 + \beta)\right) & \text{for } W = W_{+} \\
\sigma\left(-c(h_i,h_j)(1 + \beta)\right) & \text{for } W = W_{-}
\end{cases}
\end{align}
where $\sigma$ is the sigmoid function, $c(h_i,h_j)$ denotes cosine similarity between node embeddings, and $\beta > 0$ is a learnable scalar. We provide additional details on learning through the soft PSD constraint in \Cref{soft_psd_learning_algo} and reasoning behind the formulation of $\tau$ in \Cref{tau_design}.

This formulation preserves positive eigenvalues while adaptively downweighting negative ones. Geometrically, negative eigenvalues correspond to eigenvectors along which the transformation reflects or inverts the input space. In our context, this means that for a transformation with negative eigenvalues, some directions in the feature space get flipped to the opposite halfspace, violating the contrastive principle that positive neighbors should remain close (and that negative neighbors should remain far). By downweighting negative eigenvalues with factor $\tau$, we control the degree of this violation: when $\tau = 0$, we recover strict PSD constraints of \Cref{cmp_equation}, preventing any reflections or flips over the halfspace, and when $\tau = 1$, negative eigenvalues remain unchanged, allowing arbitrary linear transformations. For negative edges, $\tau$ decreases as nodes become more similar, increasing the repulsive force. For positive edges, $\tau$ increases with similarity, relaxing constraints to enable more flexible (non-PSD) transformations of already similar nodes. \Cref{soft_psd_fig} visualizes this geometric intuition. 

The learnable parameter $\beta$ allows the model to adjust the sensitivity of this adaptation during training. \Cref{tau_beta_fig} illustrates how $\tau$ varies with node similarity for different $\beta$ values. This soft constraint effectively prevents representation collapse while maintaining the contrastive inductive bias. Throughout the remainder of this work, all CMP implementations use this soft PSD constraint.

\subsection{General Applicability and Design Principles} 
CMP holds several key properties that enable its simple integration with existing architectures: (\textbf{Modularity}) CMP can be incorporated as a drop-in augmentation for standard message passing layers, requiring only the addition of the negative edge aggregation term and the soft PSD constraint. (\textbf{Task Agnostic}) Unlike many contrastive learning approaches that are primarily designed for link prediction, CMP supports arbitrary downstream tasks such as node classification, graph classification, and regression problems. (\textbf{Architectural Compatibility}) CMP remains compatible with standard GNN operations including arbitrary activation functions, bias terms, normalization layers, and residual connections. However, we emphasize that while Principle 1 is specifically enforced at the message passing step, certain architectural choices (such as non-monotonically increasing activations or batch normalization) may modulate or even reverse this contrastive behavior. Rather than attempting to maintain contrastive relationships throughout the entire network, CMP provides this inductive bias at the message passing stage to effectively combine positive and negative edges while allowing subsequent operations to transform representations as needed for the downstream task.

To demonstrate the ease of integration, we present CMP-extended versions of two canonical architectures: GraphSAGE \cite{hamilton2017inductive} and Graph Attention Networks (GAT) \cite{velivckovic2017graph} with implementation details in \Cref{cmp_implementations}. 

\section{Experiments}
\label{experiments}
\begin{figure}
  \centering
\includegraphics[width=0.86\textwidth]{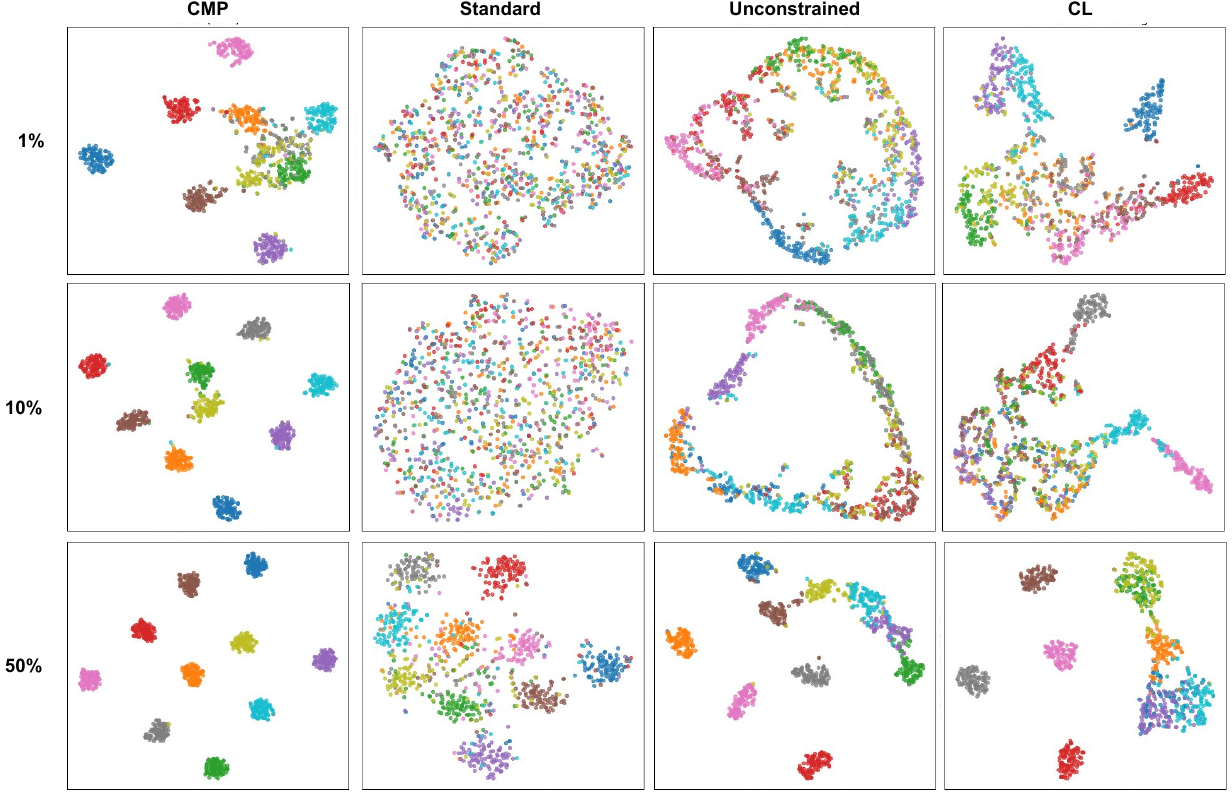}
  \caption{First two t-SNE dimensions of the Stochastic Block Model's learned embeddings taken from the last message passing layer of GraphSage. In each row are the node embeddings at a specific label rate.} 
  \label{sbm_embeds}
\end{figure}
\begin{figure}
  \centering
\includegraphics[width=\textwidth]{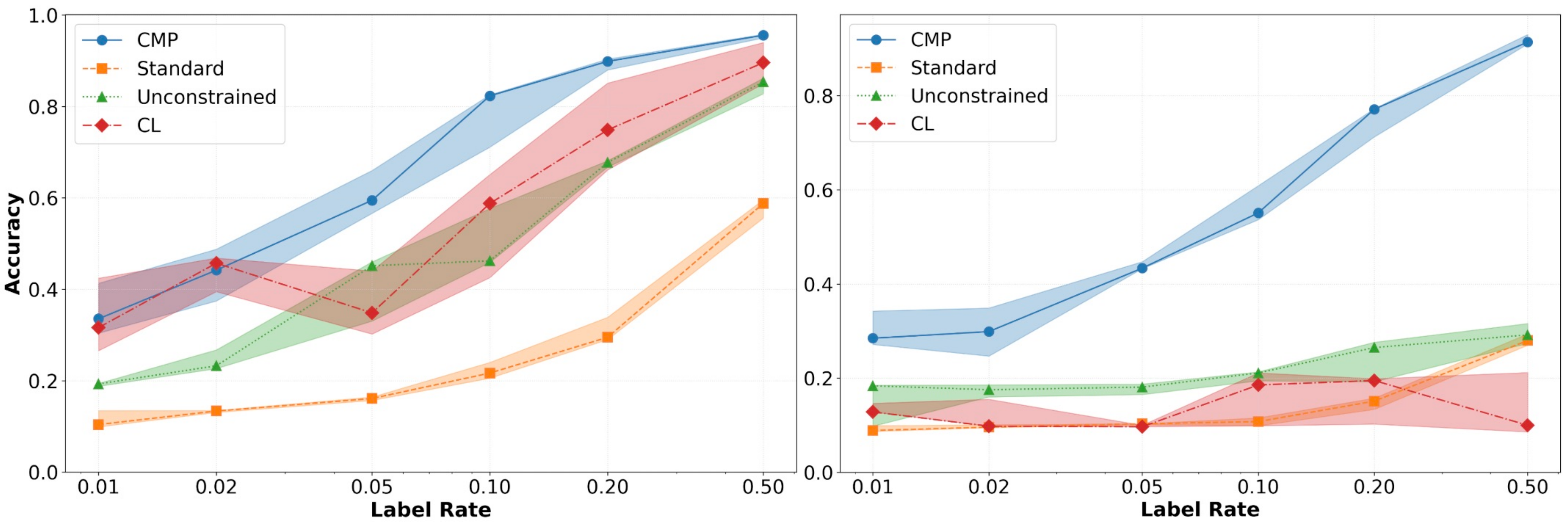}
  \caption{Class prediction accuracy of CMP and baselines on the Stochastic Block Model over different label rates on the (left) GAT-CMP and (right) GraphSage-CMP architectures. The line and shaded regions show the median and $25$/$75$th percentile values, respectively.} 
  \label{sbm_perf}
\end{figure}

We evaluate CMP across synthetic datasets (Stochastic Block Model) and 9 real-world graph datasets including citation networks (Cora, CiteSeer, PubMed) \cite{yang2016revisiting}, co-authorship networks (CS, Physics) \cite{shchur2018pitfalls}, Amazon co-purchase networks (Computers, Photo) \cite{shchur2018pitfalls}, a social network (Flickr \cite{zeng2019graphsaint}), and biological networks (PPI \cite{zitnik2017predicting}). These datasets exhibit varying homophily ratios and edge densities---factors that influence the usefulness of negative edges as shown in \Cref{section3}. We systematically vary label rates from low to high to investigate how each method performs with different levels of supervision. Negative edges are randomly sampled between unconnected nodes, matching the number of positive edges ($1$:$1$ ratio). All results show median performance and/or $25$/$75$th percentiles over $5$ runs with different seeds.

\textbf{Baselines.} We evaluate CMP against three baseline categories: (1) Standard GNN with message passing only along positive edges, (2) Unconstrained GNN with message passing along both positive and negative edges without soft PSD constraints, and (3) Contrastive Learning (CL) GNN, which is a standard GNN with an auxiliary contrastive loss following \Cref{cl_equation}. These prototypical baselines deliberately focus on fundamental architectural differences rather than implementation-specific optimizations, allowing us to isolate how negative edges and the proposed PSD constraints contribute across different settings.

\textbf{Implementation.} All models share the same architecture: a linear lifting layer, two message passing layers, and a linear projection head with LeakyReLU activations. The message passing follows \Cref{cmp_implementations} for GraphSAGE and GAT implementations. Additional details are in \Cref{app_model_details}.

\subsection{Stochastic Block Model}
\label{sbm_experiments}
\textbf{Setup.} We construct a Stochastic Block Model (SBM) with $n=1000$ nodes uniformly distributed across $C=10$ communities. Each node has $d=32$ dimensional features initialized randomly from $\mathcal{N}(0,1)$. The intra-community and inter-community edge probabilities are $p_{in}=0.25$ and $p_{out}=0.05$, respectively. We evaluate performance across label rates $r \in \{1\%, 2\%, 5\%, 10\%, 20\%, 50\%\}$ to examine behavior in different supervision regimes. Furthermore, we evaluate CMP and baselines at different levels of heterophily, which is included in \Cref{heterophily_sbm_exp}.

\textbf{Results.} \Cref{sbm_perf} shows that CMP consistently outperforms baseline methods across different label rates and architectures. While baseline methods show architecture-dependent performance (with GAT utilizing additional labels more effectively than GraphSAGE), CMP achieves strong performance with both architectures. 

The improvement in CMP performance can be explained with \Cref{sbm_embeds}, which visualizes the learned node embeddings of different methods. At $r=1\%$, CMP already produces well-separated community clusters while baseline methods struggle to distinguish communities. As label rates increase, all methods eventually achieve community separation, but CMP maintains well-delineated clusters throughout. This empirically validates that the soft PSD constraint used in CMP is effective in enforcing contrastive representations of samples from shared or different communities, especially in low-label rate regimes as expected given the findings of \Cref{section3}.

\subsection{Real Datasets}
\label{real_experiments}
\begin{table}[t]
\centering
\caption{Median test accuracy performance of GraphSage across real datasets and label rates (\% labels used from total labels). Bold values indicate the best performing model.}
\label{real_data_acc_results}
\resizebox{\textwidth}{!}{
\definecolor{lightgray}{gray}{0.9}
\definecolor{blue}{rgb}{0.7, 0.8, 1.0}
\definecolor{orange}{rgb}{1.0, 0.8, 0.6}
\definecolor{purple}{rgb}{0.9, 0.7, 0.9}
\begin{tabular}{l|l|c|c|c|c|c|c|c|c|c}
\toprule
Dataset & Model & 0.1\% & 0.2\% & 0.5\% & 1\% & 2\% & 5\% & 10\% & 20\% & 50\% \\
\midrule
Cora & CMP & \cellcolor{lightgray}{--} & \cellcolor{lightgray}{--} & \cellcolor{lightgray}{--} & \cellcolor{purple}{63.96} & \cellcolor{purple}{\textbf{71.64}} & \cellcolor{purple}{\textbf{77.64}} & \cellcolor{purple}{\textbf{79.80}} & \cellcolor{blue}{\textbf{82.81}} & \cellcolor{blue}{\textbf{86.35}} \\
& Standard & \cellcolor{lightgray}{--} & \cellcolor{lightgray}{--} & \cellcolor{lightgray}{--} & \cellcolor{purple}{47.49} & \cellcolor{purple}{61.24} & \cellcolor{purple}{71.69} & \cellcolor{purple}{77.81} & \cellcolor{blue}{82.76} & \cellcolor{blue}{85.70} \\
& Unconstrained & \cellcolor{lightgray}{--} & \cellcolor{lightgray}{--} & \cellcolor{lightgray}{--} & \cellcolor{purple}{62.46} & \cellcolor{purple}{66.69} & \cellcolor{purple}{75.55} & \cellcolor{purple}{78.32} & \cellcolor{blue}{81.34} & \cellcolor{blue}{83.76} \\
& CL & \cellcolor{lightgray}{--} & \cellcolor{lightgray}{--} & \cellcolor{lightgray}{--} & \cellcolor{purple}{\textbf{67.98}} & \cellcolor{purple}{68.92} & \cellcolor{purple}{71.65} & \cellcolor{purple}{75.23} & \cellcolor{blue}{77.07} & \cellcolor{blue}{83.39} \\
\midrule
CiteSeer & CMP & \cellcolor{lightgray}{--} & \cellcolor{lightgray}{--} & \cellcolor{lightgray}{--} & \cellcolor{orange}{55.40} & \cellcolor{orange}{\textbf{61.63}} & \cellcolor{orange}{64.79} & \cellcolor{orange}{67.97} & \cellcolor{orange}{70.56} & \cellcolor{orange}{73.20} \\
& Standard & \cellcolor{lightgray}{--} & \cellcolor{lightgray}{--} & \cellcolor{lightgray}{--} & \cellcolor{orange}{52.13} & \cellcolor{orange}{59.85} & \cellcolor{orange}{\textbf{67.02}} & \cellcolor{orange}{\textbf{68.57}} & \cellcolor{orange}{\textbf{71.29}} & \cellcolor{orange}{\textbf{74.62}} \\
& Unconstrained & \cellcolor{lightgray}{--} & \cellcolor{lightgray}{--} & \cellcolor{lightgray}{--} & \cellcolor{orange}{55.77} & \cellcolor{orange}{60.81} & \cellcolor{orange}{64.26} & \cellcolor{orange}{65.64} & \cellcolor{orange}{68.11} & \cellcolor{orange}{70.65} \\
& CL & \cellcolor{lightgray}{--} & \cellcolor{lightgray}{--} & \cellcolor{lightgray}{--} & \cellcolor{orange}{\textbf{59.18}} & \cellcolor{orange}{60.05} & \cellcolor{orange}{63.66} & \cellcolor{orange}{66.99} & \cellcolor{orange}{68.15} & \cellcolor{orange}{72.00} \\
\midrule
PubMed & CMP & \cellcolor{orange}{64.97} & \cellcolor{orange}{71.02} & \cellcolor{orange}{74.67} & \cellcolor{orange}{76.08} & \cellcolor{orange}{78.40} & \cellcolor{orange}{80.72} & \cellcolor{lightgray}{--} & \cellcolor{lightgray}{--} & \cellcolor{lightgray}{--} \\
& Standard & \cellcolor{orange}{59.59} & \cellcolor{orange}{71.08} & \cellcolor{orange}{\textbf{77.74}} & \cellcolor{orange}{\textbf{79.89}} & \cellcolor{orange}{\textbf{81.21}} & \cellcolor{orange}{\textbf{83.62}} & \cellcolor{lightgray}{--} & \cellcolor{lightgray}{--} & \cellcolor{lightgray}{--} \\
& Unconstrained & \cellcolor{orange}{60.44} & \cellcolor{orange}{69.63} & \cellcolor{orange}{72.90} & \cellcolor{orange}{75.39} & \cellcolor{orange}{76.43} & \cellcolor{orange}{79.97} & \cellcolor{lightgray}{--} & \cellcolor{lightgray}{--} & \cellcolor{lightgray}{--} \\
& CL & \cellcolor{orange}{\textbf{69.38}} & \cellcolor{orange}{\textbf{73.66}} & \cellcolor{orange}{75.32} & \cellcolor{orange}{76.10} & \cellcolor{orange}{78.36} & \cellcolor{orange}{80.42} & \cellcolor{lightgray}{--} & \cellcolor{lightgray}{--} & \cellcolor{lightgray}{--} \\
\midrule
CS & CMP & \cellcolor{purple}{\textbf{67.10}} & \cellcolor{purple}{\textbf{77.09}} & \cellcolor{purple}{\textbf{85.65}} & \cellcolor{purple}{\textbf{86.50}} & \cellcolor{blue}{\textbf{89.47}} & \cellcolor{blue}{91.01} & \cellcolor{lightgray}{--} & \cellcolor{lightgray}{--} & \cellcolor{lightgray}{--} \\
& Standard & \cellcolor{purple}{60.52} & \cellcolor{purple}{70.40} & \cellcolor{purple}{81.08} & \cellcolor{purple}{83.42} & \cellcolor{blue}{88.60} & \cellcolor{blue}{\textbf{92.20}} & \cellcolor{lightgray}{--} & \cellcolor{lightgray}{--} & \cellcolor{lightgray}{--} \\
& Unconstrained & \cellcolor{purple}{65.53} & \cellcolor{purple}{76.14} & \cellcolor{purple}{83.95} & \cellcolor{purple}{85.13} & \cellcolor{blue}{88.48} & \cellcolor{blue}{90.08} & \cellcolor{lightgray}{--} & \cellcolor{lightgray}{--} & \cellcolor{lightgray}{--} \\
& CL & \cellcolor{purple}{63.90} & \cellcolor{purple}{69.52} & \cellcolor{purple}{77.35} & \cellcolor{purple}{78.50} & \cellcolor{blue}{82.06} & \cellcolor{blue}{83.67} & \cellcolor{lightgray}{--} & \cellcolor{lightgray}{--} & \cellcolor{lightgray}{--} \\
\midrule
Physics & CMP & \cellcolor{purple}{\textbf{88.51}} & \cellcolor{purple}{\textbf{90.92}} & \cellcolor{purple}{93.04} & \cellcolor{purple}{\textbf{94.05}} & \cellcolor{blue}{94.48} & \cellcolor{blue}{95.00} & \cellcolor{lightgray}{--} & \cellcolor{lightgray}{--} & \cellcolor{lightgray}{--} \\
& Standard & \cellcolor{purple}{84.07} & \cellcolor{purple}{86.53} & \cellcolor{purple}{91.13} & \cellcolor{purple}{94.03} & \cellcolor{blue}{\textbf{94.97}} & \cellcolor{blue}{\textbf{95.56}} & \cellcolor{lightgray}{--} & \cellcolor{lightgray}{--} & \cellcolor{lightgray}{--} \\
& Unconstrained & \cellcolor{purple}{83.46} & \cellcolor{purple}{89.78} & \cellcolor{purple}{\textbf{93.27}} & \cellcolor{purple}{93.75} & \cellcolor{blue}{94.27} & \cellcolor{blue}{94.79} & \cellcolor{lightgray}{--} & \cellcolor{lightgray}{--} & \cellcolor{lightgray}{--} \\
& CL & \cellcolor{purple}{83.95} & \cellcolor{purple}{86.89} & \cellcolor{purple}{90.82} & \cellcolor{purple}{92.32} & \cellcolor{blue}{93.50} & \cellcolor{blue}{94.26} & \cellcolor{lightgray}{--} & \cellcolor{lightgray}{--} & \cellcolor{lightgray}{--} \\
\midrule
Computers & CMP & \cellcolor{purple}{\textbf{59.41}} & \cellcolor{purple}{\textbf{71.40}} & \cellcolor{purple}{\textbf{77.94}} & \cellcolor{purple}{\textbf{81.67}} & \cellcolor{blue}{\textbf{85.50}} & \cellcolor{blue}{\textbf{87.22}} & \cellcolor{lightgray}{--} & \cellcolor{lightgray}{--} & \cellcolor{lightgray}{--} \\
& Standard & \cellcolor{purple}{44.06} & \cellcolor{purple}{53.97} & \cellcolor{purple}{66.33} & \cellcolor{purple}{76.52} & \cellcolor{blue}{81.31} & \cellcolor{blue}{83.64} & \cellcolor{lightgray}{--} & \cellcolor{lightgray}{--} & \cellcolor{lightgray}{--} \\
& Unconstrained & \cellcolor{purple}{53.99} & \cellcolor{purple}{64.37} & \cellcolor{purple}{76.95} & \cellcolor{purple}{80.00} & \cellcolor{blue}{84.58} & \cellcolor{blue}{86.55} & \cellcolor{lightgray}{--} & \cellcolor{lightgray}{--} & \cellcolor{lightgray}{--} \\
& CL & \cellcolor{purple}{42.11} & \cellcolor{purple}{47.04} & \cellcolor{purple}{38.43} & \cellcolor{purple}{37.39} & \cellcolor{blue}{37.34} & \cellcolor{blue}{37.50} & \cellcolor{lightgray}{--} & \cellcolor{lightgray}{--} & \cellcolor{lightgray}{--} \\
\midrule
Photo & CMP & \cellcolor{purple}{52.84} & \cellcolor{purple}{\textbf{68.66}} & \cellcolor{purple}{\textbf{81.54}} & \cellcolor{purple}{\textbf{89.03}} & \cellcolor{blue}{\textbf{90.83}} & \cellcolor{blue}{91.27} & \cellcolor{lightgray}{--} & \cellcolor{lightgray}{--} & \cellcolor{lightgray}{--} \\
& Standard & \cellcolor{purple}{32.03} & \cellcolor{purple}{42.77} & \cellcolor{purple}{69.20} & \cellcolor{purple}{78.48} & \cellcolor{blue}{87.72} & \cellcolor{blue}{\textbf{91.90}} & \cellcolor{lightgray}{--} & \cellcolor{lightgray}{--} & \cellcolor{lightgray}{--} \\
& Unconstrained & \cellcolor{purple}{\textbf{56.32}} & \cellcolor{purple}{63.01} & \cellcolor{purple}{80.97} & \cellcolor{purple}{88.06} & \cellcolor{blue}{89.23} & \cellcolor{blue}{90.77} & \cellcolor{lightgray}{--} & \cellcolor{lightgray}{--} & \cellcolor{lightgray}{--} \\
& CL & \cellcolor{purple}{54.59} & \cellcolor{purple}{56.07} & \cellcolor{purple}{59.30} & \cellcolor{purple}{50.23} & \cellcolor{blue}{63.65} & \cellcolor{blue}{49.78} & \cellcolor{lightgray}{--} & \cellcolor{lightgray}{--} & \cellcolor{lightgray}{--} \\
\midrule
PPI & CMP & \cellcolor{purple}{\textbf{45.83}} & \cellcolor{purple}{\textbf{45.31}} & \cellcolor{purple}{\textbf{47.75}} & \cellcolor{purple}{48.60} & \cellcolor{blue}{51.15} & \cellcolor{blue}{50.77} & \cellcolor{lightgray}{--} & \cellcolor{lightgray}{--} & \cellcolor{lightgray}{--} \\
& Standard & \cellcolor{purple}{44.80} & \cellcolor{purple}{45.05} & \cellcolor{purple}{46.99} & \cellcolor{purple}{\textbf{49.85}} & \cellcolor{blue}{\textbf{51.66}} & \cellcolor{blue}{\textbf{55.75}} & \cellcolor{lightgray}{--} & \cellcolor{lightgray}{--} & \cellcolor{lightgray}{--} \\
& Unconstrained & \cellcolor{purple}{44.91} & \cellcolor{purple}{43.86} & \cellcolor{purple}{46.91} & \cellcolor{purple}{48.57} & \cellcolor{blue}{48.77} & \cellcolor{blue}{50.94} & \cellcolor{lightgray}{--} & \cellcolor{lightgray}{--} & \cellcolor{lightgray}{--} \\
& CL & \cellcolor{purple}{42.10} & \cellcolor{purple}{41.54} & \cellcolor{purple}{43.55} & \cellcolor{purple}{43.58} & \cellcolor{blue}{44.36} & \cellcolor{blue}{44.84} & \cellcolor{lightgray}{--} & \cellcolor{lightgray}{--} & \cellcolor{lightgray}{--} \\
\midrule
Flickr & CMP & \cellcolor{orange}{42.51} & \cellcolor{orange}{\textbf{43.06}} & \cellcolor{orange}{\textbf{43.60}} & \cellcolor{orange}{43.96} & \cellcolor{orange}{44.36} & \cellcolor{orange}{\textbf{45.70}} & \cellcolor{lightgray}{--} & \cellcolor{lightgray}{--} & \cellcolor{lightgray}{--} \\
& Standard & \cellcolor{orange}{42.44} & \cellcolor{orange}{42.56} & \cellcolor{orange}{43.50} & \cellcolor{orange}{\textbf{44.01}} & \cellcolor{orange}{44.36} & \cellcolor{orange}{45.43} & \cellcolor{lightgray}{--} & \cellcolor{lightgray}{--} & \cellcolor{lightgray}{--} \\
& Unconstrained & \cellcolor{orange}{42.37} & \cellcolor{orange}{42.63} & \cellcolor{orange}{43.50} & \cellcolor{orange}{43.67} & \cellcolor{orange}{44.51} & \cellcolor{orange}{45.24} & \cellcolor{lightgray}{--} & \cellcolor{lightgray}{--} & \cellcolor{lightgray}{--} \\
& CL & \cellcolor{orange}{\textbf{43.04}} & \cellcolor{orange}{42.83} & \cellcolor{orange}{43.31} & \cellcolor{orange}{43.94} & \cellcolor{orange}{\textbf{44.61}} & \cellcolor{orange}{45.02} & \cellcolor{lightgray}{--} & \cellcolor{lightgray}{--} & \cellcolor{lightgray}{--} \\
\bottomrule
\end{tabular}
}
\end{table}
\textbf{Setup}. We evaluate on $9$ real-world datasets composed of $8$ transductive learning tasks and $1$ inductive task (PPI) with varying graph sizes: small datasets (Cora, CiteSeer) use label rates from $1$-$50\%$, while large datasets use $0.1$-$5\%$. \Cref{real_dataset_graph_table} reports important graph characteristics (e.g. homophily ratio and edge density) that influence the usefulness of negative edges based on \Cref{section3}. GraphSage results are reported here and GAT results in \Cref{gat_results_real_datasets}; both architectures exhibit consistent patterns. 

\textbf{Results}. \Cref{real_data_acc_results} presents the performance across datasets and label rates. We highlight three takeaways:

\textbf{(1) Performance Across Label Rates}: CMP generally outperforms all baselines in low-label regimes, shown in $\color{purple}{\text{purple}}$. However, at high label regimes given by $\color{blue}{\text{blue}}$, we see that often the standard baseline performs best or has comparable performance to CMP. This difference in performance between CMP and the standard baseline reinforces the insight from \Cref{theory_fig}: as label rates increase, the relative information gain from negative edges diminishes since positive edges alone provide sufficient signal for accurate predictions. 

\textbf{(2) Performance Across Graph Characteristics}: CMP performs similarly against baselines on datasets shown in $\color{orange}{\text{orange}}$. To understand why, we compute the graph characteristics of each dataset in \Cref{real_dataset_graph_table}. We find that the performance advantages of CMP does not extend to datasets with low homophily ratios, sparse edge densities, and low cluster coefficients. This empirically reaffirms our findings from \Cref{section3}, suggesting that negative edges do not indeed provide a meaningful information gain in real graphs lacking strong community structure.

\textbf{(3) Performance of CMP versus Negative Edge Baselines}: CMP consistently outperforms both the unconstrained baseline and contrastive learning baseline across most datasets and label rates. These performance improvements can be attributed to the importance of the soft PSD constraint (CMP vs. Unconstrained) and having access to negative edges at inference time (CMP vs. CL). 

\section{Related Works}
\label{background_section}
\textbf{Graph Learning with Negative Edges}. Traditional graph neural networks typically operate exclusively on existing or positive edges, but several works have recognized the potential value of nonexistent or negative edges. In recommender systems, \cite{liu2023pane,chen2024sigformer} incorporate negative edges through separate weight matrices, while attention-based models \cite{yun2019graph,kreuzer2021rethinking} implicitly consider negative connections by allowing attention across all node pairs. However, these approaches employ unconstrained weights without a principled inductive bias and lack rigorous analysis of when negative edges provide information gain. Our work advances beyond these limitations by demonstrating scenarios where negative edges offer information gain and providing a principled architecture to handle positive versus negative edges differentially.

\textbf{Contrastive Learning}. There are connections between our work and contrastive learning techniques that have been applied to graph settings \cite{you2020graph,zhu2021graph,xu2021infogcl} after its initial development in image recognition \cite{chopra2005learning}. The common formulation typically maximizes embedding similarity between positive pairs while minimizing embedding similarity between negative pairs through a contrastive objective \cite{hamilton2017inductive}:
\begin{align}
\label{cl_equation}
    \mathcal{L}_{\mathcal{G}}(h_i) &= -\log \left(\sigma(h_i^\top h_j)\right) - Q \cdot \mathbb{E}_{z \sim P_n(i)} \log \left(\sigma(-h_i^\top h_{z})\right)
\end{align}
where $j$ is a positive node to $i$, $\sigma$ is the sigmoid function, $P_n$ is a negative sampling distribution, and $Q$ defines the number of negative samples. The core difference with our method is that we directly integrate this contrastive principle into the message passing architecture rather than as an auxiliary loss. This approach offers two benefits: we only need to optimize a single downstream objective instead of a complex multi-objective loss, and we have access to negative edges at test time, enabling better generalization capabilities, which was shown in \Cref{experiments}.

\textbf{Negative Sampling}. Sampling informative negative edges is itself a challenging, domain-dependent problem. In an ideal setting, one would draw negatives directly from domain knowledge or ground-truth data—for instance, explicitly recorded “dislikes” in recommender systems \cite{liu2023pane}. In practice, however, most graphs arise in a positive–unlabeled context \cite{li2003learning,elkan2008learning,kiryo2017positive}, where only confirmed positive edges are recorded and the vast majority of potential edges are unlabelled rather than truly negative. The work in \cite{prakash2024evaluating} shows that the choice of negative-sampling strategy can bias personalization models, highlighting the importance of this design decision.
To address the positive–unlabeled setting, prior work has proposed hard-negative sampling \cite{robinson2020contrastive,schroff2015facenet} and random sampling with debiasing corrections \cite{chuang2020debiased,du2015convex}. Although these techniques can improve learning signals, they often introduce substantial computational overhead and require meticulous hyper-parameter tuning. As a result, the field of graph learning still relies mainly on simpler strategies: uniform random sampling from non-connected node pairs \cite{hamilton2017inductive} and structured negative sampling that respects graph topology or metadata \cite{kanatsoulis2021tex}.

\section{Conclusion}
We introduced Contrastive Message Passing (CMP), a message passing architecture that leverages both positive (aka existing) and negative (aka non-existent) edges in graph neural networks by imposing spectral constraints on learnable weights. Through theoretical analysis, we showed that negative edges provide maximum information gain under the specific conditions of high homophily, high edge density, and low label rates. By directly embedding contrastive learning dynamics into the message passing architecture rather than using auxiliary losses, CMP enables single-objective optimization and utilizes negative edges during both training and inference. Future work could explore non-random negative sampling strategies to improve performance in challenging scenarios (low homophily, sparse connectivity, high label rates) and develop computationally efficient alternatives to enforcing soft PSD constraints without the eigendecomposition operation.

\begin{ack}
We thank Vijay Prakash Dwivedi for helpful feedback and discussions. PPH was supported in part
by the NSF GRFP. We also acknowledge the support of NSF under Nos. OAC-1835598 (CINES), CCF-1918940 (Expeditions), DMS-2327709 (IHBEM), IIS-2403318 (III); Stanford Data Applications Initiative, Wu Tsai Neurosciences Institute, Stanford Institute for Human-Centered AI, Chan Zuckerberg Initiative, Amazon, Genentech, GSK, Hitachi, SAP, and UCB.
\end{ack}

\bibliography{neurips_2026}

\appendix

\section{Proofs}
\label{app_proofs}

\begin{restatable}{lemma}{informationgainpos}
\label{lemma1}
Suppose $p_{\text{in}}$ and $p_{\text{out}}$ are intra and inter community edge probability, respectively. The information gain from observing a positive edge (existing connection) between an unlabeled node $i$ and a labeled node $j$ with label $Y_j = c_j$ is
\begin{align}
\Delta H^+_i &= \log C + h^+ \log h^+ + (1-h^+) \log \frac{1-h^+}{C-1}
\end{align}
where
\begin{align}
    h^+ &= \frac{p_{in}}{p_{in} + (C-1)p_{out}}
\end{align}
\end{restatable}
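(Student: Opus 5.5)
We treat the label $Y_i$ of the unlabeled node as a random variable and compute the information gain as the reduction in entropy, $\Delta H^+_i = H(Y_i) - H(Y_i \mid A_{ij}=1, Y_j=c_j)$. The plan has three steps: fix the prior, compute the posterior by Bayes' rule, and subtract the two entropies.

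\textbf{Prior.} Since the $C$ communities have equal size, the prior on $Y_i$ is uniform, so $H(Y_i)=\log C$. We take $Y_i$ independent of $Y_j$ a priori, which is exact in the large-$n$ limit and is the standard SBM idealization.

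\textbf{Posterior.} Conditioning on the edge, Bayes' rule gives
\begin{align}
P(Y_i=c \mid A_{ij}=1, Y_j=c_j) = \frac{P(A_{ij}=1\mid Y_i=c, Y_j=c_j)\,\tfrac{1}{C}}{\sum_{c'} P(A_{ij}=1\mid Y_i=c', Y_j=c_j)\,\tfrac{1}{C}}.
\end{align}
The likelihood equals $p_{in}$ when $c=c_j$ and $p_{out}$ for each of the $C-1$ other labels. The denominator is therefore $p_{in}+(C-1)p_{out}$. The posterior puts mass $h^+ = p_{in}/(p_{in}+(C-1)p_{out})$ on $c_j$ and mass $(1-h^+)/(C-1)$ on each remaining community. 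Note that $h^+$ coincides with the homophily ratio $s$.

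\textbf{Entropy subtraction.} The posterior entropy is
\begin{align}
-h^+\log h^+ - (C-1)\cdot\frac{1-h^+}{C-1}\log\frac{1-h^+}{C-1} = -h^+\log h^+ - (1-h^+)\log\frac{1-h^+}{C-1}.
\end{align}
Subtracting this from $\log C$ yields exactly the stated formula for $\Delta H^+_i$.

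As a consistency check, $\Delta H^+_i$ is also the KL divergence between the posterior and the uniform prior, so it is nonnegative. It vanishes exactly when $h^+=1/C$, i.e.\ when $p_{in}=p_{out}$, which matches the intuition that an uninformative SBM yields no gain.

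The only real obstacle is modeling rather than computation: we must justify treating $Y_i$ and $Y_j$ as a priori independent and uniform. Strictly, this is not exact under finite equal partitions (sampling without replacement), and it also ignores other graph evidence about $Y_i$. We resolve this by stating explicitly that the gain is measured for a single observed edge in isolation, with the equal-partition assumption giving the uniform prior. Any finite-$n$ correction is $O(1/n)$ and is dropped.

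Once this setup is fixed, the remaining algebra is routine. The same template will handle \Cref{lemma2}, with $1-p_{in}$ and $1-p_{out}$ replacing $p_{in}$ and $p_{out}$.
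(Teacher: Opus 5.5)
Your proposal is correct and follows essentially the same route as the paper's proof: a uniform prior giving $H(Y_i)=\log C$, Bayes' rule with likelihoods $p_{in}$ and $p_{out}$ yielding the posterior mass $h^+$ on $c_j$ and $(1-h^+)/(C-1)$ on each other community, and then subtracting the posterior entropy from $\log C$. Your additions go beyond the paper without changing the argument: the KL-divergence check (nonnegativity, vanishing exactly when $p_{in}=p_{out}$) and your explicit statement that $Y_i$ and $Y_j$ are treated as a priori independent (an idealization the paper adopts silently).
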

\begin{proof}
Without prior information, node $i$ has uniform probability of belonging to any community
\begin{align}
P(Y_i = c) = \frac{1}{C} \text{ for all } c \in \{1,2,...,C\}
\end{align}

The entropy of this uniform distribution is:
\begin{align}
H(Y_i) &= -\sum_{c=1}^{C} P(Y_i = c) \log P(Y_i = c)\\
&= -\sum_{c=1}^{C} \frac{1}{C} \log \frac{1}{C}\\
&= \log C
\end{align}
When we observe an edge between nodes $i$ and $j$, and know node $j$'s label ($Y_j = c_j$), we apply Bayes' rule
\begin{align}
P(Y_i = c | Y_j = c_j, A_{ij} = 1) = \frac{P(A_{ij} = 1 | Y_i = c, Y_j = c_j) P(Y_i = c | Y_j = c_j)}{P(A_{ij} = 1 | Y_j = c_j)}
\end{align}
Given SBM assumptions and uniform prior on $Y_i$,
\begin{align}
P(A_{ij} = 1 | Y_i = c, Y_j = c_j) &= 
\begin{cases}
p_{in} & \text{if } c = c_j\\
p_{out} & \text{if } c \neq c_j
\end{cases}\\
P(Y_i = c | Y_j = c_j) &= \frac{1}{C} \text{ (uniform prior)}\\
P(A_{ij} = 1 | Y_j = c_j) &= \frac{1}{C}(p_{in} + (C-1)p_{out})
\end{align}
This gives us
\begin{align}
P(Y_i = c_j | Y_j = c_j, A_{ij} = 1) &= \frac{p_{in}}{p_{in} + (C-1)p_{out}} = h^+\\
P(Y_i = c | Y_j = c_j, A_{ij} = 1) &= \frac{p_{out}}{p_{in} + (C-1)p_{out}} = \frac{1-h^+}{C-1} \text{ for } c \neq c_j.
\end{align}
Then, we compute the conditional entropy as
\begin{align}
H(Y_i | Y_j = c_j, A_{ij} = 1) &= -h^+ \log h^+ - \sum_{c \neq c_j} \frac{1-h^+}{C-1} \log \frac{1-h^+}{C-1}\\
&= -h^+ \log h^+ - (1-h^+) \log \frac{1-h^+}{C-1}
\end{align}
Consequently,
\begin{align}
\Delta H^+_i &= H(Y_i) - H(Y_i | Y_j = c_j, A_{ij} = 1)\\
&= \log C + h \log h^+ + (1-h^+) \log \frac{1-h^+}{C-1}
\end{align}
\end{proof}

\begin{restatable}{lemma}{informationgainneg}
\label{lemma2}
    Suppose $p_{\text{in}}$ and $p_{\text{out}}$ are intra and inter community edge probability, respectively. The information gain from observing a negative edge (non-connection) between an unlabeled node $i$ and a labeled node $j$ with label $Y_j = c_j$ is
\begin{align}
\Delta H^-_i &= \log C + h^- \log h^- + (1-h^-) \log \frac{1-h^-}{C-1}
\end{align}
where
\begin{align}
h^- &= \frac{1-p_{in}}{(1-p_{in}) + (C-1)(1-p_{out})}
\end{align}
\end{restatable}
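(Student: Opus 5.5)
The plan is to mirror the proof of \Cref{lemma1} exactly, replacing the event $A_{ij}=1$ by $A_{ij}=0$. Under the same uniform prior $P(Y_i=c)=\frac{1}{C}$ as before, the prior entropy is $H(Y_i)=\log C$. The information gain is then $\Delta H^-_i = H(Y_i) - H(Y_i \mid Y_j=c_j, A_{ij}=0)$, and the task reduces to computing the posterior over $Y_i$ after observing a non-edge to a labeled node.

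The first step is to write down the SBM likelihoods of a non-edge:
\begin{align}
P(A_{ij}=0 \mid Y_i=c, Y_j=c_j) =
\begin{cases}
1-p_{in} & \text{if } c=c_j\\
1-p_{out} & \text{if } c\neq c_j
\end{cases}.
\end{align}
Independence of $Y_i$ and $Y_j$ gives the prior $P(Y_i=c\mid Y_j=c_j)=\frac{1}{C}$. Marginalizing yields the normalizer
\begin{align}
P(A_{ij}=0\mid Y_j=c_j)=\tfrac{1}{C}\big((1-p_{in})+(C-1)(1-p_{out})\big).
\end{align}
Bayes' rule then gives
\begin{align}
P(Y_i=c_j\mid Y_j=c_j, A_{ij}=0)=h^-,
\end{align}
and, for each of the $C-1$ classes $c\neq c_j$,
\begin{align}
P(Y_i=c\mid Y_j=c_j,A_{ij}=0)=\frac{1-p_{out}}{(1-p_{in})+(C-1)(1-p_{out})}.
\end{align}
I would check that this second quantity equals $\frac{1-h^-}{C-1}$. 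The identity follows because $1-h^- = \frac{(C-1)(1-p_{out})}{(1-p_{in})+(C-1)(1-p_{out})}$.

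The second step computes the conditional entropy. The posterior has one atom of mass $h^-$ and $C-1$ equal atoms of mass $\frac{1-h^-}{C-1}$, so
\begin{align}
H(Y_i\mid Y_j=c_j,A_{ij}=0)=-h^-\log h^- -(1-h^-)\log\frac{1-h^-}{C-1}.
\end{align}
Subtracting this from $\log C$ gives the claimed expression for $\Delta H^-_i$.

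No step is a real obstacle; the argument is a direct computation. The only place needing care is the consistency identity $\frac{1-p_{out}}{(1-p_{in})+(C-1)(1-p_{out})}=\frac{1-h^-}{C-1}$, which is what collapses the sum over $c\neq c_j$ into a single term. One further point is worth stating explicitly in the proof: the lemma holds for $p_{in}<1$, so that $h^-$ is well defined. When $p_{in}>p_{out}$, we get $h^-<\frac1C$, so the non-edge pushes mass away from $c_j$. The gain is still nonnegative, since it equals the KL divergence of this posterior from the uniform prior.
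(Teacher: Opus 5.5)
Your proposal is correct and follows the paper's proof step for step. Both use the uniform prior with entropy $\log C$, apply Bayes' rule with non-edge likelihoods $1-p_{in}$ and $1-p_{out}$, obtain the posterior $h^-$ on $c_j$ and $\frac{1-h^-}{C-1}$ on each other class, and subtract the resulting conditional entropy from $\log C$. Your extra checks are also correct and are left implicit in the paper: the identity $\frac{1-p_{out}}{(1-p_{in})+(C-1)(1-p_{out})}=\frac{1-h^-}{C-1}$, and nonnegativity of the gain as a KL divergence from the uniform prior.
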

\begin{proof}
The proof structure will follow very similarly to \Cref{lemma1}, but we repeat here for completeness. Without prior information, suppose node $i$ has uniform probability of belonging to any community
\begin{align}
P(Y_i = c) = \frac{1}{C} \text{ for all } c \in \{1,2,...,C\}.
\end{align}
The entropy of this uniform distribution is
\begin{align}
H(Y_i) &= -\sum_{c=1}^{C} P(Y_i = c) \log P(Y_i = c)\\
&= -\sum_{c=1}^{C} \frac{1}{C} \log \frac{1}{C}\\
&= \log C
\end{align}
When we observe no edge between nodes $i$ and $j$, and know node $j$'s label ($Y_j = c_j$), we apply Bayes' rule to compute the conditional
\begin{align}
P(Y_i = c | Y_j = c_j, A_{ij} = 0) = \frac{P(A_{ij} = 0 | Y_i = c, Y_j = c_j) P(Y_i = c | Y_j = c_j)}{P(A_{ij} = 0 | Y_j = c_j)}.
\end{align}
Given SBM assumptions and uniform prior on $Y_i$,
\begin{align}
P(A_{ij} = 0 | Y_i = c, Y_j = c_j) &= 
\begin{cases}
1-p_{in} & \text{if } c = c_j\\
1-p_{out} & \text{if } c \neq c_j
\end{cases}\\
P(Y_i = c | Y_j = c_j) &= \frac{1}{C} \text{ (uniform prior)}\\
P(A_{ij} = 0 | Y_j = c_j) &= \frac{1}{C}((1-p_{in}) + (C-1)(1-p_{out})),
\end{align}
which results in
\begin{align}
P(Y_i = c_j | Y_j = c_j, A_{ij} = 0) &= \frac{1-p_{in}}{(1-p_{in}) + (C-1)(1-p_{out})} = h^-\\
P(Y_i = c | Y_j = c_j, A_{ij} = 0) &= \frac{1-p_{out}}{(1-p_{in}) + (C-1)(1-p_{out})} = \frac{1-h^-}{C-1} \text{ for } c \neq c_j.
\end{align}
Then, we compute the conditional entropy as
\begin{align}
H(Y_i | Y_j = c_j, A_{ij} = 0) &= -h^- \log h^- - \sum_{c \neq c_j} \frac{1-h^-}{C-1} \log \frac{1-h^-}{C-1}\\
&= -h^- \log h^- - (1-h^-) \log \frac{1-h^-}{C-1}. 
\end{align}
Consequently,
\begin{align}
\Delta H^-_i &= H(Y_i) - H(Y_i | Y_j = c_j, A_{ij} = 0)\\
&= \log C + h^- \log h^- + (1-h^-) \log \frac{1-h^-}{C-1}.
\end{align}
\end{proof}

\infotheorem*

\begin{proof}
From \Cref{lemma1} and \Cref{lemma2}, we aim to express $h^+$ and $h^-$ in terms of $s$, $d$, $r$, and $C$ by first deriving expressions for $p_{in}$ and $p_{out}$:

From $d = \frac{p_{in} + (C-1)p_{out}}{C}$, we have:
\begin{align}
p_{in} + (C-1)p_{out} = Cd
\end{align}
From $s = \frac{p_{in}}{p_{in} + (C-1)p_{out}}$, we have:
\begin{align}
p_{in} &= s(p_{in} + (C-1)p_{out})\\
&= sCd
\end{align}
And consequently:
\begin{align}
(C-1)p_{out} &= Cd - p_{in}\\
&= Cd - sCd\\
&= (1-s)Cd
\end{align}
Therefore:
\begin{align}
p_{out} &= \frac{(1-s)Cd}{C-1}
\end{align}
For positive edges, note that $h^+ = s$ directly from the definition. 

For negative edges, substituting these expressions into the formula for $h^-$:
\begin{align}
h^- &= \frac{1-p_{in}}{(1-p_{in}) + (C-1)(1-p_{out})}\\
&= \frac{1-sCd}{(1-sCd) + (C-1)(1-\frac{(1-s)Cd}{C-1})}\\
&= \frac{1-sCd}{(1-sCd) + (C-1) - (1-s)Cd}\\
&= \frac{1-sCd}{1-sCd + C-1 - Cd + sCd}\\
&= \frac{1-sCd}{C(1-d)}
\end{align}

In a graph with $n$ nodes where a fraction $r$ are labeled, the number of labeled nodes is $nr$ and the number of unlabeled nodes is $n(1 - r)$. As such, the total number of labeled-unlabeled node pairs is $nr n(1 - r) = n^2 r(1-r)$.

For any pair of nodes, the probability of having an edge between them is simply the edge density $d$, and the probability of having no edge is $(1-d)$. This can be verified:

\begin{align}
P(A_{ij} = 1) &= P(Y_i = c_j, Y_j = c_j)P(A_{ij} = 1| Y_i = c_j, Y_j = c_j) \\
&\quad + \sum_{c \neq c_j} P(Y_i = c, Y_j = c_j)P(A_{ij} = 1| Y_i = c, Y_j = c_j) \\
&= \frac{1}{C}(p_{in}) + \frac{C-1}{C}(p_{out})
\end{align}
Substituting our expressions for $p_{in}$ and $p_{out}$:
\begin{align}
P(A_{ij} = 1) &= \frac{1}{C}(sCd) + \frac{C-1}{C}(\frac{(1-s)Cd}{C-1})\\
&= \frac{sCd}{C} + \frac{(1-s)Cd}{C}\\
&= d
\end{align}
Similarly, $P(A_{ij} = 0) = 1-d$.

The expected total information gain from positive edges is the product of the number of labeled-unlabeled pairs, the probability of an edge, $\Delta H^+$, and the redundancy functions:
\begin{align}
IG_{pos\_total}(s,r,d) &= n^2r(1-r) d \Delta H^+(s,d) f_{pos}(s,r,d)
\end{align}
And the expected total information gain from negative edges is:
\begin{align}
IG_{neg\_total}(s,r,d) &= n^2r(1-r) (1-d) \Delta H^-(s,d) f_{neg}(s,r,d)
\end{align}
To get the expected gain per unlabeled node, we divide by the number of unlabeled nodes $n(1-r)$:
\begin{align}
IG_{pos}(s,r,d) &= \frac{n^2r(1-r) d \Delta H^+(s,d)f_{pos}(s,r,d)}{n(1-r)}\\
&= nr d \Delta H^+(s,d) f_{pos}(s,r,d)
\end{align}
And for negative edges:
\begin{align}
IG_{neg}(s,r,d) &= \frac{n^2r(1-r) (1-d) \Delta H^-(s,d)f_{neg}(s,r,d)}{n(1-r)}\\
&= nr (1-d) \Delta H^-(s,d) f_{neg}(s,r,d)
\end{align}
Where $\Delta H^+(s,d)$ and $\Delta H^-(s,d)$ are calculated by substituting our derived expressions for $h^+$ and $h^-$ into the formulas from \Cref{lemma1} and \Cref{lemma2}.
\end{proof}

\corr*
\begin{proof}
This follows directly from the expressions for $IG_{pos}(s,r,d)$ and $IG_{neg}(s,r,d)$ in \Cref{theorem1}. The total information gain per unlabeled node is $IG_{total}(s,r,d) = IG_{pos}(s,r,d) + IG_{neg}(s,r,d) = nr [d \Delta H^+(s,d)f_{pos}(s,r,d) + (1-d) \Delta H^-(s,d)f_{neg}(s,r,d)]$. Therefore, the relative contribution of negative edges is the ratio of $IG_{neg}(s,r,d)$ to $IG_{total}(s,r,d)$, which simplifies to the expression above.
\end{proof}

\section{Contrastive Message Passing Method Details}
\subsection{Learning through the Soft PSD Constraint}
\label{soft_psd_learning_algo}
\begin{algorithm}
\caption{Learning with Soft Positive Semi-Definite Constraints}
\label{soft_psd_algo}
\begin{algorithmic}
\REQUIRE Weight matrix $W$, node features $h_i$ and $h_j$, edge type $t \in \{\text{pos}, \text{neg}\}$, learnable scalar $\beta > 0$
\ENSURE Soft-PSD constrained matrix $\text{Soft-PSD}(W)$
\STATE Compute eigendecomposition: $W = Q\Lambda Q^{-1}$ \{$\Lambda$ is diagonal matrix of eigenvalues\}
\STATE Calculate cosine similarity: $c = \frac{h_i^T h_j}{||h_i|| \cdot ||h_j||}$
\STATE Compute downweighting factor $\tau$:
\begin{align}
\tau = \begin{cases}
\sigma\left(+c(h_i,h_j)(1 + \beta)\right) & \text{if } t = \text{pos} \\
\sigma\left(-c(h_i,h_j)(1 + \beta)\right) & \text{if } t = \text{neg}
\end{cases}
\end{align}
\STATE Modify eigenvalues:
\begin{align}
\hat{\Lambda}_{ii} = 
\begin{cases}
\Lambda_{ii} & \text{if } \Lambda_{ii} \geq 0 \\
\tau \cdot \Lambda_{ii} & \text{if } \Lambda_{ii} < 0
\end{cases}
\end{align}
\STATE Compute soft-PSD matrix: $\text{Soft-PSD}(W) = Q\hat{\Lambda}Q^{-1}$

\textbf{return} $\text{Soft-PSD}(W)$
\end{algorithmic}
\textbf{Usage in Model:} The returned $\text{Soft-PSD}(W)$ is then used in place of $W$ in the message passing operation. For example, 
\begin{align}
h_i^{(l+1)} &= \sum_{j \in \mathcal{N}_{\text{pos}}(i)} \text{Soft-PSD}(W_+)h_j^{(l)} - \sum_{k \in \mathcal{N}_{\text{neg}}(i)} \text{Soft-PSD}(W_-)h_k^{(l)}
\end{align}
\textbf{Learning:} Gradient descent updates $W$ and $\beta$ via backpropagation through the eigendecomposition
\end{algorithm}
Given a learnable weight matrix $W$, learning this matrix while still enforcing the soft PSD constraint is simple: (1) diagonalize $W$ into $Q$, $\Lambda$, and $Q^{-1}$, (2) downweight the negative eigenvalues in $\Lambda$ by $\tau$ where the resulting reweighted eigenvalues are given by $\hat{\Lambda}$, and (3) recombine the terms $Q$, $\hat{\Lambda}$, and $Q^{-1}$ to get your soft PSD matrix $\text{Soft-PSD}(W) = Q\hat{\Lambda}Q^{-1}$, which is then applied to the node features. Since all the operations are differentiable, we can backpropagate through this soft PSD constraint, making learning feasible. These operations are summarized in \Cref{soft_psd_algo}. 
\subsection{Design of $\tau$}
\label{tau_design}
Here, we aim to describe the intuition behind why $\tau$ is formulated as
\begin{align}
\tau = \begin{cases}
\sigma\left(+c(h_i,h_j)(1 + \beta)\right) & \text{for } W = W_{+} \\
\sigma\left(-c(h_i,h_j)(1 + \beta)\right) & \text{for } W = W_{-}
\end{cases}.
\end{align}

This formulation adaptively controls the PSD constraint based on node similarity. For positive edges, high node similarity produces higher $\tau$ values, relaxing the PSD constraint to prevent over-compression of already similar representations. For negative edges, high similarity results in lower $\tau$ values, enforcing stricter PSD constraints to ensure proper repulsion.

The design directly leverages cosine similarity $c(h_i,h_j) \in [-1,1]$ as input to the sigmoid function. As similarity increases toward $1$, positive edges receive less constraint ($\tau$ approaching $1$), while negative edges receive stricter constraint ($\tau$ approaching $0$). Conversely, as similarity decreases toward $-1$, positive edges receive stricter constraints while negative edges receive more flexibility.

The term $(1 + \beta)$ serves dual purposes: the constant $1$ ensures a minimum level of inductive bias even if $\beta$ approaches zero during training, preventing the model from losing its contrastive geometric properties. The learnable parameter $\beta > 0$ controls the sensitivity of the adaptation mechanism---higher $\beta$ values expand the input range to the sigmoid, creating more extreme $\tau$ values near $0$ or $1$ to enforce stronger constraints.
\subsection{Example CMP Implementations: GraphSAGE and GAT}
\label{cmp_implementations}
Throughout this section, weight matrices $W^{+}$ and $W^{-}$ are subject to the soft-PSD constraint defined in \Cref{soft_psd_section}.

\textbf{GraphSAGE-CMP.} The standard GraphSAGE update rule with mean aggregation is given by:
\begin{align}
    h_i^{(l + 1)} = W_{1} h_i^{(l)} + W_{2} \cdot \text{mean}_{j\in\mathcal{N}_{pos}(i)}h_j^{(l)}.
\end{align}
Our CMP implementation extends this to
\begin{align}
    h_i^{(l + 1)} = W h_i^{(l)} + W_{+}\cdot \text{mean}_{j\in\mathcal{N}_{pos}(i)}h_j^{(l)} - W_{-} \cdot \text{mean}_{k\in\mathcal{N}_{neg}(i)}h_k^{(l)},
\end{align}
enabling GraphSAGE to explicitly model both positive and negative relationships.

\textbf{GAT-CMP.} The standard GAT formulation computes node representations through attention-weighted aggregation:
\begin{align}
    h_i^{(l+1)} = \sum_{j \in \mathcal{N}(i) \cup \{i\}} \alpha_{ij}^{(l)} W^{(l)} h_j^{(l)}
\end{align}
where $\alpha_{ij}^{(l)}$ is the attention coefficient between nodes $i$ and $j$. CMP extends this to
\begin{align}
    h_i^{(l+1)} = \sum_{j \in \mathcal{N}_{pos}(i)\cup \{i\}} \alpha_{ij}^{+,(l)} W_{+}^{(l)} h_j^{(l)} - \sum_{k \in \mathcal{N}_{neg}(i)} \alpha_{ik}^{-,(l)} W_{-}^{(l)} h_k^{(l)}.
\end{align}
Here, $\alpha_{ij}^{+,(l)}$ and $\alpha_{ik}^{-,(l)}$ denote attention coefficients in layer $l$ computed separately for positive and negative edges, respectively.

\section{Experimental Details}
\label{app_model_details}
These specifications remain consistent across both the SBM experiments (\Cref{sbm_experiments}) and the real dataset experiments (\Cref{real_experiments}). 

\textbf{Model Architecture}. All models share the following components:

(1. Input Feature Transformation) An initial linear projection to map input features to hidden representation
\begin{equation}
    h_i^{(0)} = W_{\text{lift}} x_i + b_{\text{lift}}
\end{equation}
(2. Message Passing Layers) Two message passing layers whose form follows \Cref{cmp_implementations} with residual connections
\begin{equation}
    h_i^{(\ell+1)} = h_i^{(\ell)} + \text{MP}^{(\ell)}(h_i^{(\ell)}, \{h_j^{(\ell)} : j \in \mathcal{N}(i)\})
\end{equation}
(3. Normalization and Activation) Layer normalization followed by LeakyReLU activation
\begin{equation}
    \hat{h}_i^{(\ell)} = \text{LN}(h_i^{(\ell)})
\end{equation}
\begin{equation}
    \tilde{h}_i^{(\ell)} = \text{LeakyReLU}(\hat{h}_i^{(\ell)}, \alpha=0.2)
\end{equation}
(4. Output Projection) Final linear projection to output dimension
\begin{equation}
    z_i = W_{\text{proj}} h_i^{(L)} + b_{\text{proj}}
\end{equation}
\textbf{Hyperparameters.} Our experiments use the hyperparameters listed in \Cref{hyperparameters}.

\begin{table}[h]
\centering
\begin{tabular}{|l|c|}
\hline
\textbf{Hyperparameter} & \textbf{Value} \\
\hline
Hidden dimension & 64 \\
Learning rate & 0.01 \\
Weight decay & 5e-4 \\
Batch size & Full Batch \\
Optimizer & Adam \\
Activation function & LeakyReLU (negative slope=$0.2$) \\
Number of layers & 2 \\
Early stopping patience & 100 epochs \\
Maximum epochs & 200 \\
Random seeds & [42, 43, 44, 45, 46] \\
\hline
\end{tabular}
\caption{Hyperparameters used in all experiments}
\label{hyperparameters}
\end{table}
\textbf{Negative Edge Sampling.} For contrastive approaches, negative edges are sampled using the negative sampling technique from PyTorch Geometric \cite{fey2019fast}. Specifically, we randomly sample from the negative edge set:
\begin{equation}
    \mathcal{E}^- = \{(i, j) \mid i, j \in \mathcal{V}, (i, j) \notin \mathcal{E}^+\}
\end{equation}
We sample the same number of negative edges as positive edges.

\textbf{Computational Resources}. All experiments were conducted on one NVIDIA A100 machine. 

\textbf{Real Dataset Graph Characteristics}. We provide an outline of the graph characteristics of each dataset. Note that these can often be computed as a preprocessing step and provide guidance on whether one should use CMP on the dataset. For instance, if a dataset has favorable attributes as defined by \Cref{section3} (aka high homophily, edge density, and low label rates), we can expect that CMP will perform well; conversely, if a dataset has low homphily or edge density, a standard GNN may be favorable. 

\begin{table}[h]
\centering
\begin{tabular}{|l|r|r|r|r|l|}
\hline
\textbf{Dataset} & \textbf{Nodes} & \textbf{Edges} & \textbf{Features} & \textbf{Classes} \\
\hline
Cora & 2,708 & 10,556 & 1,433 & 7 \\
Amazon Photos & 7,650 & 238,162 & 745 & 8 \\
Amazon Computers & 13,752 & 491,722 & 767 & 10 \\
Coauthor Physics & 34,493 & 495,924 & 8,415 & 5 \\
Coauthor CS & 18,333 & 163,788 & 6,805 & 15 \\
PPI & $\sim$ 2,245.3 & $\sim$ 61,318.4 & 50 & 121 (\text{\#tasks}) \\
CiteSeer & 3,327 & 9,104 & 3,703 & 6 \\
PubMed & 19,717 & 88,648 & 500 & 3 \\
Flickr & 89,250 & 899,756 & 500 & 7 \\
\hline
\end{tabular}
\caption{Graph Dataset Statistics}
\label{dataset_description_table}
\end{table}
\begin{table}[h]
\centering
\caption{Comparison of Graph Dataset Characteristics}
\begin{tabular}{lrrrrrr}
\hline
\textbf{Dataset} & \textbf{Modularity} & \textbf{Clust.} & \textbf{Density} & \textbf{Avg.} & \textbf{Homophily} & \textbf{Intra/Inter} \\
 & & \textbf{Coef.} & \textbf{($10^{-3}$)} & \textbf{Degree} & \textbf{Ratio} & \textbf{Ratio} \\
\hline
Cora & 0.64 & 0.24 & 1.44 & 3.9 & 19.5 & 4.3 \\
Amazon Photos & 0.63 & 0.40 & 4.07 & 31.1 & 24.3 & 4.8 \\
Amazon Computers & 0.48 & 0.34 & 2.60 & 35.8 & 13.3 & 3.5 \\
Coauthor Physics & 0.47 & 0.38 & 0.42 & 14.4 & 28.7 & 13.6 \\
Coauthor CS & 0.70 & 0.34 & 0.49 & 8.9 & 33.3 & 4.2 \\
PPI & 0.01 & 0.16 & 10.36 & 18.3 & 2.6 & 3.2 \\
CiteSeer & 0.54 & 0.14 & 0.82 & 2.7 & 12.9 & 2.8 \\
PubMed & 0.43 & 0.06 & 0.23 & 4.5 & 7.3 & 4.1 \\
Flickr & 0.07 & 0.03 & 0.11 & 10.1 & 1.3 & 0.5 \\
\hline
\end{tabular}
\label{real_dataset_graph_table}
\end{table} 
The graph dataset statistics were taken from \cite{fey2019fast} and the graph dataset characteristics were calculated directly using \cite{hagberg2008exploring}.

\section{Additional Experiments}
\subsection{GAT Results for Real Datasets}
\label{gat_results_real_datasets} 
\begin{table}[t]
\centering
\caption{Median test accuracy performance of GAT across real datasets and label rates (\% labels used from total labels). Bold values indicate the best performing model.}
\label{real_data_acc_gat_results}
\resizebox{\textwidth}{!}{
\definecolor{lightgray}{gray}{0.9}
\definecolor{blue}{rgb}{0.7, 0.8, 1.0}
\definecolor{orange}{rgb}{1.0, 0.8, 0.6}
\definecolor{purple}{rgb}{0.9, 0.7, 0.9}
\begin{tabular}{l|l|c|c|c|c|c|c|c|c|c}
\toprule
Dataset & Model & 0.1\% & 0.2\% & 0.5\% & 1\% & 2\% & 5\% & 10\% & 20\% & 50\% \\
\midrule
Cora & CMP & \cellcolor{lightgray}{--} & \cellcolor{lightgray}{--} & \cellcolor{lightgray}{--} & \cellcolor{purple}{56.45} & \cellcolor{purple}{\textbf{67.32}} & \cellcolor{purple}{\textbf{77.72}} & \cellcolor{purple}{\textbf{80.44}} & \cellcolor{blue}{\textbf{82.29}} & \cellcolor{blue}{\textbf{85.98}} \\
& Standard & \cellcolor{lightgray}{--} & \cellcolor{lightgray}{--} & \cellcolor{lightgray}{--} & \cellcolor{purple}{52.59} & \cellcolor{purple}{61.16} & \cellcolor{purple}{71.21} & \cellcolor{purple}{77.86} & \cellcolor{blue}{81.71} & \cellcolor{blue}{84.78} \\
& Unconstrained & \cellcolor{lightgray}{--} & \cellcolor{lightgray}{--} & \cellcolor{lightgray}{--} & \cellcolor{purple}{48.11} & \cellcolor{purple}{65.86} & \cellcolor{purple}{72.82} & \cellcolor{purple}{77.77} & \cellcolor{blue}{81.08} & \cellcolor{blue}{83.58} \\
& CL & \cellcolor{lightgray}{--} & \cellcolor{lightgray}{--} & \cellcolor{lightgray}{--} & \cellcolor{purple}{\textbf{63.58}} & \cellcolor{purple}{65.48} & \cellcolor{purple}{68.74} & \cellcolor{purple}{71.77} & \cellcolor{blue}{77.07} & \cellcolor{blue}{82.47} \\
\midrule
CiteSeer & CMP & \cellcolor{lightgray}{--} & \cellcolor{lightgray}{--} & \cellcolor{lightgray}{--} & \cellcolor{orange}{52.16} & \cellcolor{orange}{58.11} & \cellcolor{orange}{63.84} & \cellcolor{orange}{66.73} & \cellcolor{orange}{68.84} & \cellcolor{orange}{72.00} \\
& Standard & \cellcolor{lightgray}{--} & \cellcolor{lightgray}{--} & \cellcolor{lightgray}{--} & \cellcolor{orange}{48.45} & \cellcolor{orange}{57.70} & \cellcolor{orange}{\textbf{66.63}} & \cellcolor{orange}{\textbf{67.78}} & \cellcolor{orange}{\textbf{70.77}} & \cellcolor{orange}{\textbf{73.20}} \\
& Unconstrained & \cellcolor{lightgray}{--} & \cellcolor{lightgray}{--} & \cellcolor{lightgray}{--} & \cellcolor{orange}{48.21} & \cellcolor{orange}{52.37} & \cellcolor{orange}{62.32} & \cellcolor{orange}{64.48} & \cellcolor{orange}{67.47} & \cellcolor{orange}{70.35} \\
& CL & \cellcolor{lightgray}{--} & \cellcolor{lightgray}{--} & \cellcolor{lightgray}{--} & \cellcolor{orange}{\textbf{58.64}} & \cellcolor{orange}{\textbf{59.88}} & \cellcolor{orange}{61.33} & \cellcolor{orange}{66.09} & \cellcolor{orange}{67.00} & \cellcolor{orange}{70.65} \\
\midrule
PubMed & CMP & \cellcolor{orange}{62.29} & \cellcolor{orange}{70.36} & \cellcolor{orange}{74.27} & \cellcolor{orange}{75.88} & \cellcolor{orange}{76.98} & \cellcolor{orange}{79.27} & \cellcolor{lightgray}{--} & \cellcolor{lightgray}{--} & \cellcolor{lightgray}{--} \\
& Standard & \cellcolor{orange}{58.64} & \cellcolor{orange}{68.20} & \cellcolor{orange}{\textbf{75.90}} & \cellcolor{orange}{\textbf{77.61}} & \cellcolor{orange}{\textbf{79.50}} & \cellcolor{orange}{\textbf{82.50}} & \cellcolor{lightgray}{--} & \cellcolor{lightgray}{--} & \cellcolor{lightgray}{--} \\
& Unconstrained & \cellcolor{orange}{59.81} & \cellcolor{orange}{67.10} & \cellcolor{orange}{71.62} & \cellcolor{orange}{72.82} & \cellcolor{orange}{75.51} & \cellcolor{orange}{78.42} & \cellcolor{lightgray}{--} & \cellcolor{lightgray}{--} & \cellcolor{lightgray}{--} \\
& CL & \cellcolor{orange}{\textbf{69.43}} & \cellcolor{orange}{\textbf{71.14}} & \cellcolor{orange}{75.41} & \cellcolor{orange}{77.41} & \cellcolor{orange}{79.57} & \cellcolor{orange}{80.92} & \cellcolor{lightgray}{--} & \cellcolor{lightgray}{--} & \cellcolor{lightgray}{--} \\
\midrule
CS & CMP & \cellcolor{purple}{\textbf{67.12}} & \cellcolor{purple}{\textbf{74.69}} & \cellcolor{purple}{\textbf{84.04}} & \cellcolor{purple}{\textbf{86.01}} & \cellcolor{blue}{\textbf{89.10}} & \cellcolor{blue}{\textbf{91.18}} & \cellcolor{lightgray}{--} & \cellcolor{lightgray}{--} & \cellcolor{lightgray}{--} \\
& Standard & \cellcolor{purple}{41.52} & \cellcolor{purple}{59.26} & \cellcolor{purple}{74.37} & \cellcolor{purple}{81.07} & \cellcolor{blue}{85.03} & \cellcolor{blue}{88.40} & \cellcolor{lightgray}{--} & \cellcolor{lightgray}{--} & \cellcolor{lightgray}{--} \\
& Unconstrained & \cellcolor{purple}{60.22} & \cellcolor{purple}{73.13} & \cellcolor{purple}{81.60} & \cellcolor{purple}{84.86} & \cellcolor{blue}{86.75} & \cellcolor{blue}{89.81} & \cellcolor{lightgray}{--} & \cellcolor{lightgray}{--} & \cellcolor{lightgray}{--} \\
& CL & \cellcolor{purple}{54.42} & \cellcolor{purple}{56.22} & \cellcolor{purple}{53.05} & \cellcolor{purple}{54.61} & \cellcolor{blue}{59.55} & \cellcolor{blue}{48.80} & \cellcolor{lightgray}{--} & \cellcolor{lightgray}{--} & \cellcolor{lightgray}{--} \\
\midrule
Physics & CMP & \cellcolor{purple}{\textbf{87.40}} & \cellcolor{purple}{\textbf{90.83}} & \cellcolor{purple}{\textbf{92.76}} & \cellcolor{purple}{\textbf{94.07}} & \cellcolor{blue}{\textbf{94.68}} & \cellcolor{blue}{\textbf{95.14}} & \cellcolor{lightgray}{--} & \cellcolor{lightgray}{--} & \cellcolor{lightgray}{--} \\
& Standard & \cellcolor{purple}{80.47} & \cellcolor{purple}{80.26} & \cellcolor{purple}{87.71} & \cellcolor{purple}{91.26} & \cellcolor{blue}{92.50} & \cellcolor{blue}{93.83} & \cellcolor{lightgray}{--} & \cellcolor{lightgray}{--} & \cellcolor{lightgray}{--} \\
& Unconstrained & \cellcolor{purple}{85.20} & \cellcolor{purple}{87.79} & \cellcolor{purple}{92.07} & \cellcolor{purple}{93.46} & \cellcolor{blue}{94.19} & \cellcolor{blue}{94.79} & \cellcolor{lightgray}{--} & \cellcolor{lightgray}{--} & \cellcolor{lightgray}{--} \\
& CL & \cellcolor{purple}{83.61} & \cellcolor{purple}{83.24} & \cellcolor{purple}{86.32} & \cellcolor{purple}{92.07} & \cellcolor{blue}{93.24} & \cellcolor{blue}{94.02} & \cellcolor{lightgray}{--} & \cellcolor{lightgray}{--} & \cellcolor{lightgray}{--} \\
\midrule
Computers & CMP & \cellcolor{purple}{\textbf{50.47}} & \cellcolor{purple}{\textbf{64.72}} & \cellcolor{purple}{\textbf{77.68}} & \cellcolor{purple}{\textbf{82.98}} & \cellcolor{blue}{\textbf{85.96}} & \cellcolor{blue}{\textbf{87.41}} & \cellcolor{lightgray}{--} & \cellcolor{lightgray}{--} & \cellcolor{lightgray}{--} \\
& Standard & \cellcolor{purple}{36.62} & \cellcolor{purple}{39.91} & \cellcolor{purple}{50.66} & \cellcolor{purple}{63.15} & \cellcolor{blue}{70.20} & \cellcolor{blue}{76.95} & \cellcolor{lightgray}{--} & \cellcolor{lightgray}{--} & \cellcolor{lightgray}{--} \\
& Unconstrained & \cellcolor{purple}{39.40} & \cellcolor{purple}{58.43} & \cellcolor{purple}{69.10} & \cellcolor{purple}{79.46} & \cellcolor{blue}{82.56} & \cellcolor{blue}{84.93} & \cellcolor{lightgray}{--} & \cellcolor{lightgray}{--} & \cellcolor{lightgray}{--} \\
& CL & \cellcolor{purple}{46.47} & \cellcolor{purple}{38.72} & \cellcolor{purple}{37.06} & \cellcolor{purple}{36.50} & \cellcolor{blue}{36.52} & \cellcolor{blue}{37.42} & \cellcolor{lightgray}{--} & \cellcolor{lightgray}{--} & \cellcolor{lightgray}{--} \\
\midrule
Photo & CMP & \cellcolor{purple}{38.94} & \cellcolor{purple}{\textbf{55.17}} & \cellcolor{purple}{\textbf{82.01}} & \cellcolor{purple}{\textbf{86.97}} & \cellcolor{blue}{\textbf{90.05}} & \cellcolor{blue}{\textbf{91.36}} & \cellcolor{lightgray}{--} & \cellcolor{lightgray}{--} & \cellcolor{lightgray}{--} \\
& Standard & \cellcolor{purple}{29.01} & \cellcolor{purple}{30.00} & \cellcolor{purple}{48.72} & \cellcolor{purple}{60.70} & \cellcolor{blue}{75.04} & \cellcolor{blue}{84.85} & \cellcolor{lightgray}{--} & \cellcolor{lightgray}{--} & \cellcolor{lightgray}{--} \\
& Unconstrained & \cellcolor{purple}{29.62} & \cellcolor{purple}{44.64} & \cellcolor{purple}{72.76} & \cellcolor{purple}{81.97} & \cellcolor{blue}{87.15} & \cellcolor{blue}{90.02} & \cellcolor{lightgray}{--} & \cellcolor{lightgray}{--} & \cellcolor{lightgray}{--} \\
& CL & \cellcolor{purple}{\textbf{48.60}} & \cellcolor{purple}{54.43} & \cellcolor{purple}{46.60} & \cellcolor{purple}{43.25} & \cellcolor{blue}{27.20} & \cellcolor{blue}{35.57} & \cellcolor{lightgray}{--} & \cellcolor{lightgray}{--} & \cellcolor{lightgray}{--} \\
\midrule
PPI & CMP & \cellcolor{purple}{\textbf{45.99}} & \cellcolor{purple}{46.51} & \cellcolor{purple}{48.17} & \cellcolor{purple}{\textbf{50.81}} & \cellcolor{blue}{\textbf{53.20}} & \cellcolor{blue}{\textbf{55.01}} & \cellcolor{lightgray}{--} & \cellcolor{lightgray}{--} & \cellcolor{lightgray}{--} \\
& Standard & \cellcolor{purple}{44.34} & \cellcolor{purple}{45.36} & \cellcolor{purple}{46.16} & \cellcolor{purple}{47.92} & \cellcolor{blue}{51.41} & \cellcolor{blue}{53.08} & \cellcolor{lightgray}{--} & \cellcolor{lightgray}{--} & \cellcolor{lightgray}{--} \\
& Unconstrained & \cellcolor{purple}{45.50} & \cellcolor{purple}{\textbf{46.88}} & \cellcolor{purple}{\textbf{48.34}} & \cellcolor{purple}{49.80} & \cellcolor{blue}{51.54} & \cellcolor{blue}{54.53} & \cellcolor{lightgray}{--} & \cellcolor{lightgray}{--} & \cellcolor{lightgray}{--} \\
& CL & \cellcolor{purple}{41.81} & \cellcolor{purple}{41.26} & \cellcolor{purple}{43.31} & \cellcolor{purple}{44.65} & \cellcolor{blue}{45.92} & \cellcolor{blue}{46.87} & \cellcolor{lightgray}{--} & \cellcolor{lightgray}{--} & \cellcolor{lightgray}{--} \\
\midrule
Flickr & CMP & \cellcolor{orange}{42.31} & \cellcolor{orange}{\textbf{43.75}} & \cellcolor{orange}{43.38} & \cellcolor{orange}{\textbf{44.56}} & \cellcolor{orange}{44.96} & \cellcolor{orange}{\textbf{45.50}} & \cellcolor{lightgray}{--} & \cellcolor{lightgray}{--} & \cellcolor{lightgray}{--} \\
& Standard & \cellcolor{orange}{42.26} & \cellcolor{orange}{42.84} & \cellcolor{orange}{42.95} & \cellcolor{orange}{44.06} & \cellcolor{orange}{44.73} & \cellcolor{orange}{45.44} & \cellcolor{lightgray}{--} & \cellcolor{lightgray}{--} & \cellcolor{lightgray}{--} \\
& Unconstrained & \cellcolor{orange}{42.25} & \cellcolor{orange}{42.72} & \cellcolor{orange}{43.21} & \cellcolor{orange}{43.58} & \cellcolor{orange}{44.66} & \cellcolor{orange}{45.40} & \cellcolor{lightgray}{--} & \cellcolor{lightgray}{--} & \cellcolor{lightgray}{--} \\
& CL & \cellcolor{orange}{\textbf{42.49}} & \cellcolor{orange}{42.65} & \cellcolor{orange}{\textbf{43.48}} & \cellcolor{orange}{44.41} & \cellcolor{orange}{\textbf{45.00}} & \cellcolor{orange}{45.08} & \cellcolor{lightgray}{--} & \cellcolor{lightgray}{--} & \cellcolor{lightgray}{--} \\
\bottomrule
\end{tabular}
}
\vskip -0.15in
\end{table}

In \Cref{real_experiments}, we provided results for GraphSAGE. Here, we additionally report the results for GAT evaluated on the real datasets; results are recorded in \Cref{real_data_acc_gat_results}. The setup is identical to \Cref{real_experiments}, except with the GAT architecture rather than GraphSage.
\subsection{Stochastic Block Model: Heterophily}
\label{heterophily_sbm_exp}
\begin{figure}
  \centering
\includegraphics[width=0.95\textwidth]{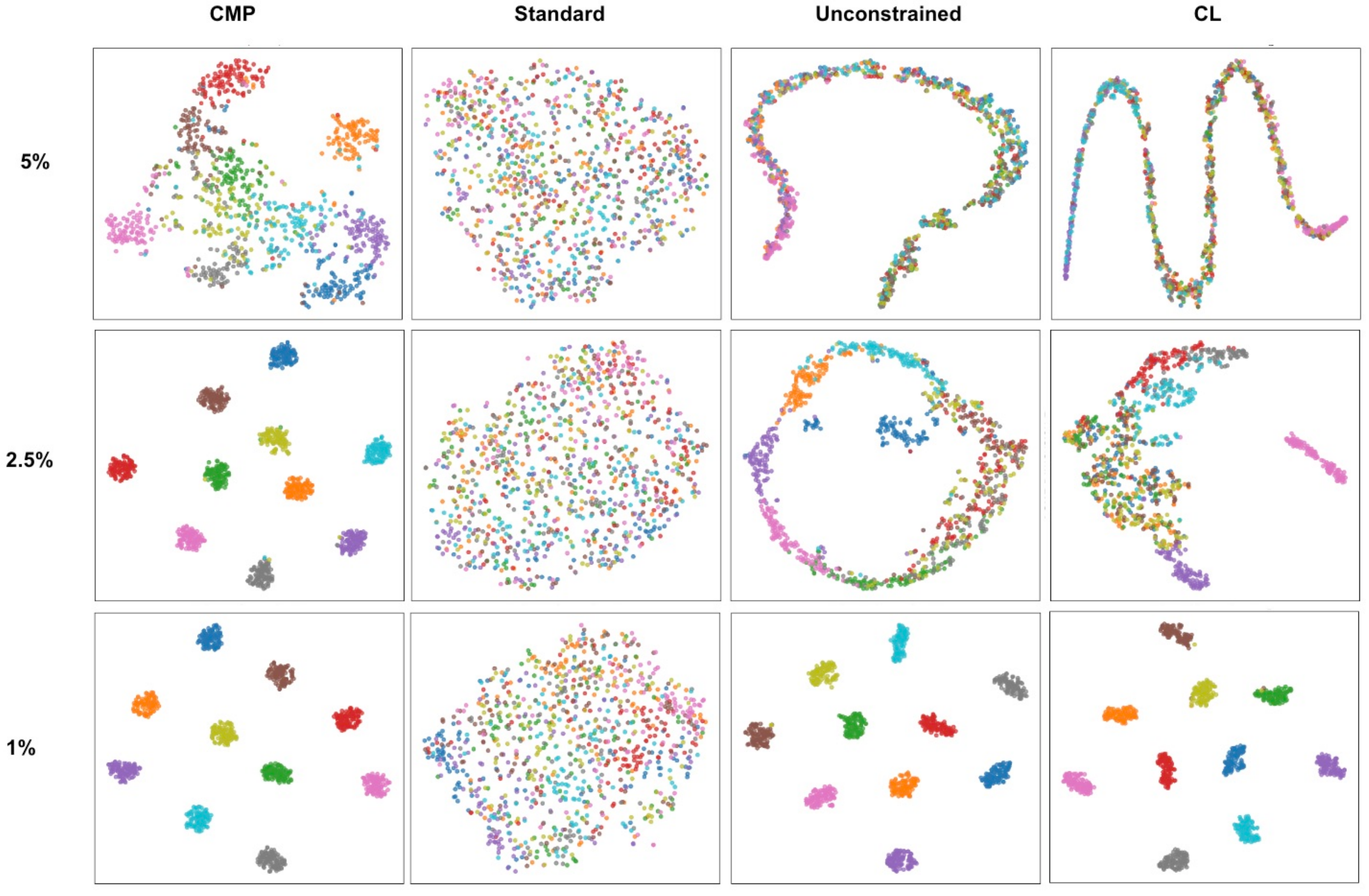}
  \caption{First two t-SNE dimensions of the Stochastic Block Model's learned embeddings taken from the last message passing layer of GraphSage. In each row are the node embeddings at a specific heterophily ratio.} 
  \label{sbm_embeds_heterophily}
  \vskip -0.2in
\end{figure}
\begin{figure}
  \centering
\includegraphics[width=\textwidth]{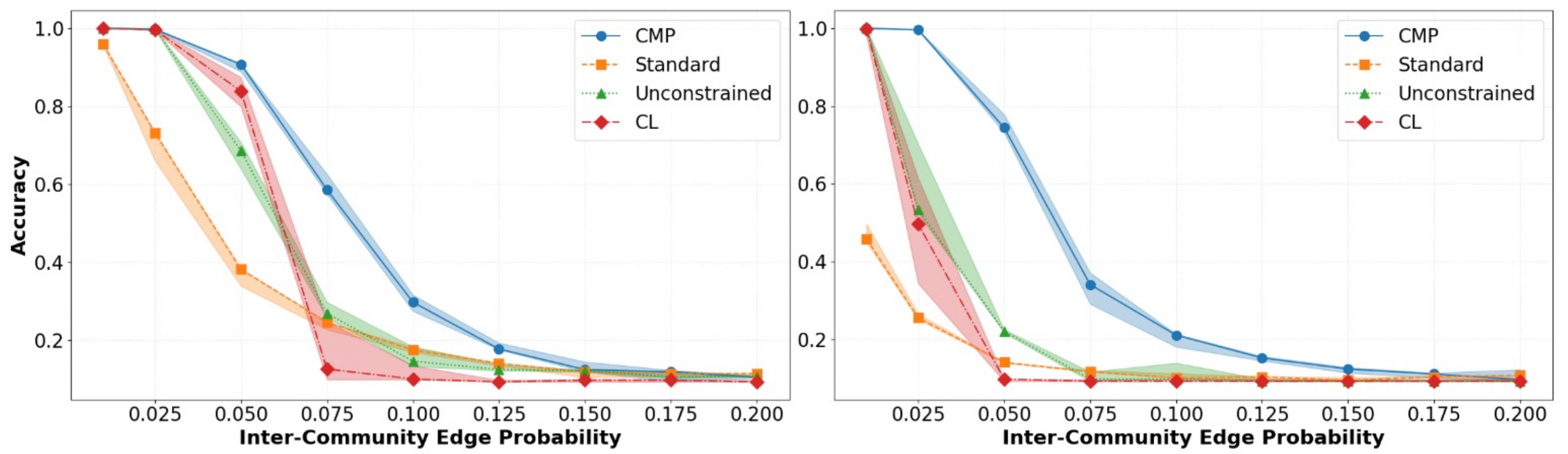}
  \caption{Class prediction accuracy of CMP and baselines on the Stochastic Block Model over different inter-community edge probabilities on the (left) GAT-CMP and (right) GraphSage-CMP architectures. The line and shaded regions show the median and $25$/$75$th percentile values, respectively.} 
  \label{sbm_heterophily_perf}
\end{figure}
Here, we investigate how graph heterophily affects the performance of CMP and baseline methods. This experiment was designed to systematically investigate the relationship between heterophily levels and the information gain provided by negative edges, as predicted by our theoretical analysis in \Cref{section3}.

\textbf{Setup}. We generated synthetic data using Stochastic Block Models (SBMs) with 1000 nodes distributed equally across 10 communities (100 nodes per community). Each node was assigned a 32-dimensional feature vector randomly initialized from $\mathcal{N}(0,1)$ distribution. For training, we used a fixed label rate of 20\% of the nodes. The intra-community edge probability was fixed at $p_{in} = 0.25$, while we varied the inter-community edge probability $p_{out}$ across the values \{0.01, 0.025, 0.05, 0.075, 0.1, 0.125, 0.15, 0.175, 0.2\} to create graphs with different heterophily levels. By fixing $p_{in}$ and varying $p_{out}$, we created graphs with different heterophily levels. Lower $p_{out}$ values correspond to higher homophily, where nodes are more likely to connect with others in the same community.

\textbf{Results}. As expected from \Cref{section3} findings, the accuracy of baselines shown in \Cref{sbm_heterophily_perf} decreases as the heterophily (i.e. inter-community edge probability) increases. Interestingly, we see that CMP continues to perform better than baselines even as we increase heterophily until very high heterophily levels where all methods converge to $0$. This performance could be explained by visualizing the learned node embeddings (shown in \Cref{sbm_embeds_heterophily}) where we see that at even high levels of heterophily ($5\%$), the community clusters are fairly well defined in CMP compared to other methods. 

\section{Additional Figures}
We provide additional geometric intuition for the soft PSD constraint in \Cref{soft_psd_fig}.
\begin{figure}
  \centering
\includegraphics[width=0.9\textwidth]{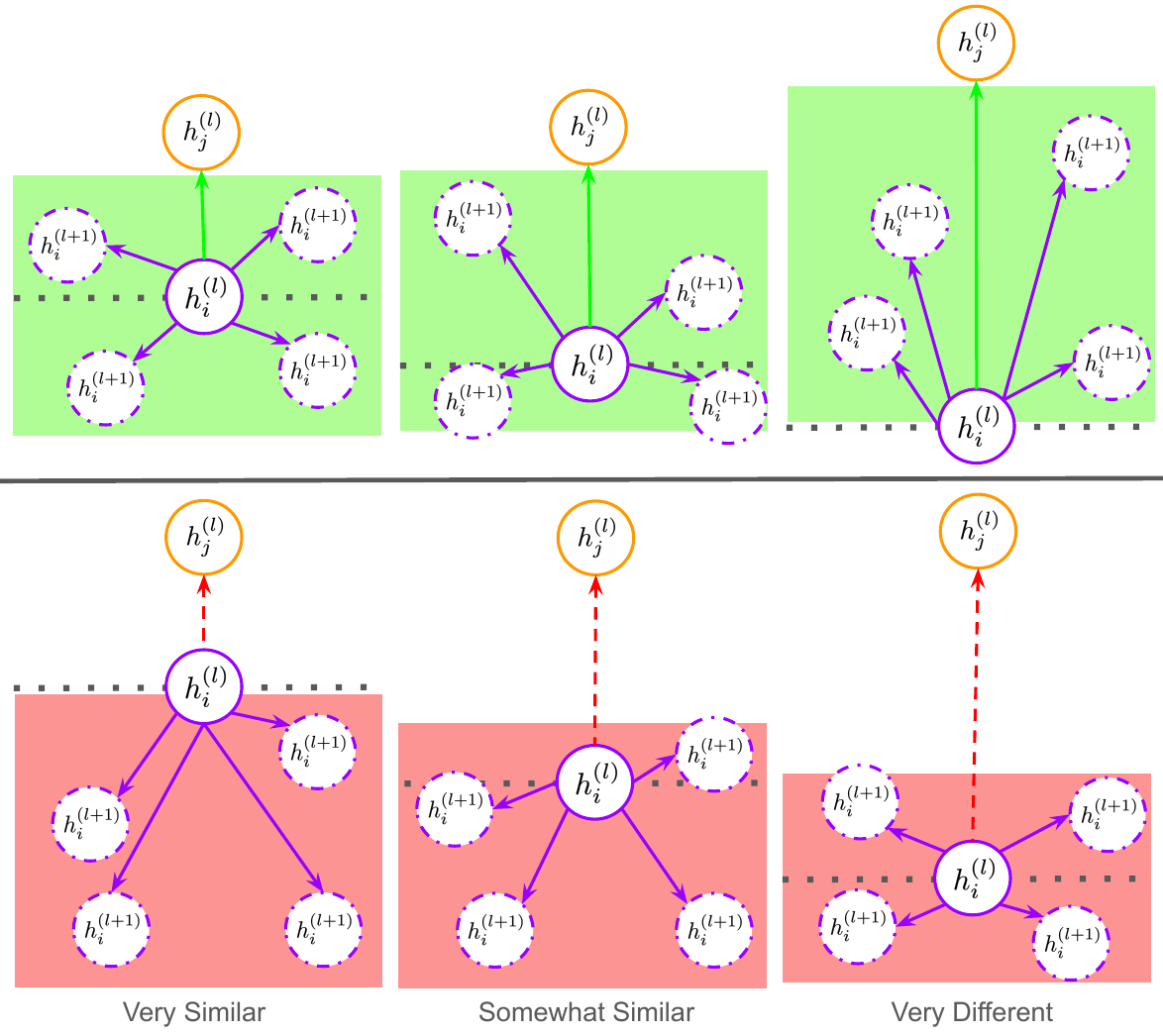}
  \caption{Soft positive semidefinite constraints modulate the allowed transformations based on node similarity. For positive edges (top row): when nodes are already similar (left), the constraint relaxes ($\tau^+ \approx 1$), allowing any linear transformation; when nodes are dissimilar (right), strict PSD is enforced ($\tau^+ \approx 0$) to pull the node embeddings closer together. For negative edges (bottom row): when nodes are similar (left), strict PSD is enforced ($\tau^- \approx 0$) to push node embeddings away from each other; when nodes are already well-separated (right), the constraint relaxes ($\tau^- \approx 1$) to enable arbitrary linear transformations and avoid unnecessary repulsion. Green and red regions indicate the permissible transformation halfspaces under the respective constraints.} 
  \label{soft_psd_fig}
\end{figure}
\section{Limitations}
\label{app_limitations}
\textbf{Random Negative Edges}. The current implementation of CMP rely on random sampling of negative edges (i.e., non-existing edges) between nodes. This approach does not account for the semantic meaning of these non-connections, which might not always represent true negative relationships. In real-world graphs and even in SBMs (as shown in \Cref{section3}), the absence of an edge can result from incomplete observations rather than a meaningful negative relationship. Future work could explore informed negative sampling strategies that consider node feature similarity, community structure, or domain knowledge to identify more informative negative edges. 

\textbf{Computational Complexity}. The computational complexity of CMP is linear with respect to the number of edges (depending on the negative edge sampling strategy; in all our experiments in \Cref{experiments}, we sample an equivalent number of negative edges as there are positive edges) and cubic with respect to the feature dimension due to the eigendecomposition required for the soft-PSD constraint. In practice, since most GNNs operate with moderate feature dimensions (typically $32$-$512$), this cubic component does not significantly impact efficiency compared to standard message passing operations. However, for high-dimensional node features, this could become a bottleneck. One potential optimization would be to use approximate eigendecomposition methods or alternative formulations of the soft-PSD constraint that avoid the cubic complexity.

\textbf{Graph Structure Dependence}. As demonstrated in \Cref{section3} and \Cref{experiments}, CMP's effectiveness is highly dependent on specific graph properties like homophily ratio, edge density, and label rates. In graphs with low homophily or extremely sparse connectivity, CMP may not outperform standard GNNs that only use positive edges. This dependence on graph structure limits the one-size-fits-all applicability of CMP, and practitioners should evaluate graph characteristics (as a preprocessing step as was done in \Cref{real_dataset_graph_table}) before adopting the method.

\end{document}